\PassOptionsToPackage{numbers,compress}{natbib}
\documentclass{article}

\usepackage[preprint]{neurips_2025}
\pdfoutput=1

\usepackage{amsmath,amssymb,amsthm}
\usepackage[pdftex]{graphicx}
\graphicspath{{figures/}}
\usepackage[utf8]{inputenc}
\usepackage[T1]{fontenc}
\usepackage{booktabs}
\usepackage{amsfonts}
\usepackage{microtype}
\usepackage[dvipsnames]{xcolor}
\usepackage{hyperref}
\usepackage{enumitem}

\newtheorem{theorem}{Theorem}
\newtheorem{proposition}[theorem]{Proposition}

\newtheorem{corollary}[theorem]{Corollary}
\newtheorem{remark}{Remark}
\newtheorem{definition}{Definition}

\DeclareMathOperator{\diag}{diag}
\DeclareMathOperator{\vecop}{vec}

\title{Spectral Asymptotics of Neural Network Loss Landscapes:\\
An Exact Decomposition of the Curvature Exponent}

\author{%
\textbf{Anherutowa Calvo\textsuperscript{1}}\\[4pt]
\textsuperscript{1}9D Labs \quad
\texttt{a@9dlabs.xyz}\\[2pt]
}

\begin{document}
\maketitle

\begin{abstract}
The curvature exponent $\alpha$ in $h_k \propto \sigma_k^\alpha$---governing how Hessian eigenvalues scale with gradient singular values---varies systematically across layer types ($\alpha \approx 2$ for convolutions, $\approx 1$ for transformer attention, $< 1$ for MLP up-projections). Why? We prove the \textbf{Spectral Alignment Decomposition}: $\alpha = 2 + d\log\Phi_k / d\log\sigma_k$, where $\Phi_k$ measures alignment between Kronecker factor eigenbases and gradient singular directions. This reduces ``why does $\alpha$ vary?'' to a geometric question we answer for LayerNorm, residual connections, and softmax heads. The decomposition implies a \textbf{spectral transfer identity} $s = \alpha\gamma$ linking curvature exponent, effective gradient rank-decay $\gamma$, and Hessian decay exponent $s$~\cite{tang2025hessian}. The identity is algebraic; its empirical content is that $\alpha$ and $\gamma$, fit on \emph{independent} data (HVPs vs.\ SVD), recover $s$ to $\approx$2\% median error across 93 layers, five architectures, and three datasets---with no free parameters. A zeta-function bound on participation ratio shows curvature concentrates onto effectively one direction per layer. As a proof of concept, we derive the architecture-adaptive preconditioner $T(\sigma;\alpha)$ and show that \textbf{Spectral Newton}---implementing $T$ in the gradient singular basis---outperforms AdamW on vision benchmarks where $\alpha \approx 2$.
\end{abstract}

\section{Introduction}

The eigenvalue spectrum of the neural network loss Hessian encodes how optimization navigates parameter space. Recent work has established \emph{what} the spectrum looks like: Hessian eigenvalues decay as power laws with exponent $s$~\cite{tang2025hessian,wu2020dissecting}, weight matrices exhibit heavy-tailed spectral densities correlated with generalization~\cite{martin2021implicit}, and Kronecker-factored approximations capture much of the Hessian structure~\cite{martens2015kfac,grosse2016kfc}. These results describe \emph{different projections} of the same underlying object, but the mechanistic question---\emph{why} $s$ takes the values it does, and how it connects to gradient structure---remains unanswered.

We study curvature \emph{along gradient singular directions}. For layer $\ell$ with gradient $G_\ell = U\Sigma V^\top$, let $h_k$ denote the exact Hessian eigenvalue along direction $u_k v_k^\top$, measured via Hessian-vector products (HVPs). Empirically, $h_k$ follows a power law in the gradient singular value $\sigma_k$:
\begin{equation}\label{eq:power-law}
    h_k = c \cdot \sigma_k^\alpha
\end{equation}
The exponent $\alpha$ is \emph{not universal}: it depends on layer type, architecture, and task (Section~\ref{sec:empirical}). Understanding $\alpha$ is the key to predicting which preconditioner a layer requires.

Our central contribution is the \textbf{Spectral Alignment Decomposition} (Theorem~\ref{thm:spectral-alignment}): an exact decomposition of $h_k$ into Kronecker factor eigenvalues and measurable alignment ratios, yielding an identity for $\alpha$ in terms of log-log slopes of these quantities. We then:
\begin{enumerate}[leftmargin=*,nosep]
    \item Derive mechanism-specific predictions for LayerNorm ($\alpha \approx 1$), residual connections, and softmax heads ($\alpha > 2$), validated by controlled ablations.
    \item Establish the \textbf{spectral transfer identity} $s = \alpha\gamma$: an algebraic link between $\alpha$, an effective rank-decay exponent $\gamma$ fit on gradient singular values, and Hessian decay $s$~\cite{tang2025hessian}, validated on 89 layers across three datasets (Section~\ref{sec:spectral}).
    \item Bound the curvature participation ratio via the Riemann zeta function, showing learning is effectively one-dimensional per layer.
    \item Prescribe architecture-adaptive spectral transfer functions $T(\sigma;\alpha)$ and validate \textbf{Spectral Newton} as their optimizer instantiation (Section~\ref{sec:optimizer-validation}).
\end{enumerate}

This is a spectral asymptotics result in the spirit of Weyl's law: the decay exponent encodes geometric information (alignment structure) of the loss landscape, not merely a fitted phenomenological parameter.

\section{Setup and Measurement}\label{sec:setup}

For layer $\ell$ with weight $W_\ell \in \mathbb{R}^{m \times n}$, gradient $G_\ell = \nabla_{W_\ell}\mathcal{L}$, and mini-batch activations $A \in \mathbb{R}^{B \times n}$, errors $\delta \in \mathbb{R}^{B \times m}$, the gradient factors as $G_\ell = \frac{1}{B}\delta^\top A$. Under the Gauss--Newton (GN) approximation, the per-layer Hessian satisfies $\mathcal{H}_{\mathrm{GN}}^{(\ell)} \approx C_\delta \otimes C_A$ with $C_\delta = \frac{1}{B}\delta^\top\delta$ and $C_A = \frac{1}{B}A^\top A$.

\paragraph{Measurement protocol.} We train models on CIFAR-10 and Tiny-ImageNet-200 to convergence, and also evaluate pretrained ImageNet-1K weights (IMAGENET1K\_V1). For each, we fix a batch of $B=2048$--$4096$ samples, compute top-$k$ gradient singular directions via SVD, and measure $h_k = v_k^\top \mathcal{H} v_k$ via double backpropagation (exact HVP). We fit $\alpha$ by log-log regression on $(\sigma_k, h_k)$. All results use exact HVPs; finite-difference Hessians yield $R^2 \approx 0.17$ with 50\% spurious negative curvatures and are excluded.

\paragraph{Exponents $\alpha$ and $\gamma$.} We fit $\alpha$ from exact HVPs on $(\sigma_k, h_k)$. We define $\gamma$ as the negated log-log slope of rank-ordered $\sigma_k$ on top-$k$ gradient singular values (typically $k{=}20$; $k{=}100$ for extended spectra in Appendix~\ref{app:gamma-profile}). This is an \emph{effective} decay exponent over a finite rank window, not a claim that $\sigma_k$ follows a global power law.

\section{The Curvature Exponent: An Empirical Landscape}\label{sec:empirical}

Figure~\ref{fig:power-law} visualizes the core empirical law $h_k \propto \sigma_k^\alpha$ on representative layers; Figure~\ref{fig:alpha-depth} shows that $\alpha$ is not random noise but tracks architecture and depth. Table~\ref{tab:arch-sweep} summarizes means across models: convolutional layers universally exhibit $\alpha \approx 2$; transformer layers cluster near $\alpha \approx 1$; output heads can exceed $\alpha = 4$.

\begin{figure}[t]
\centering
\includegraphics[width=\linewidth]{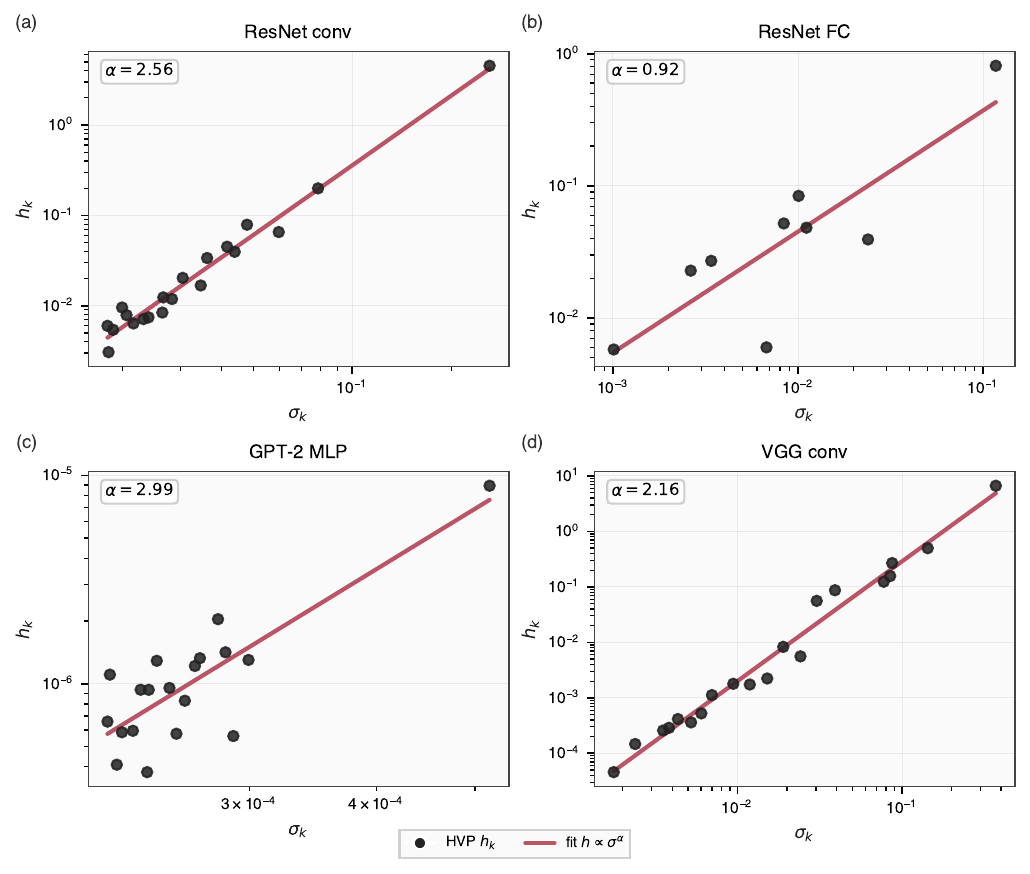}
\caption{Exact HVP curvature $h_k$ vs.\ gradient singular value $\sigma_k$ (log--log). \textbf{(a--d)} Representative layers: convolutions follow $\alpha \approx 2$; FC and transformer MLP layers deviate. Fit exponents shown in-panel; dashed grid omitted on log axes for clarity.}
\label{fig:power-law}
\end{figure}

\begin{figure}[t]
\centering
\includegraphics[width=0.78\linewidth]{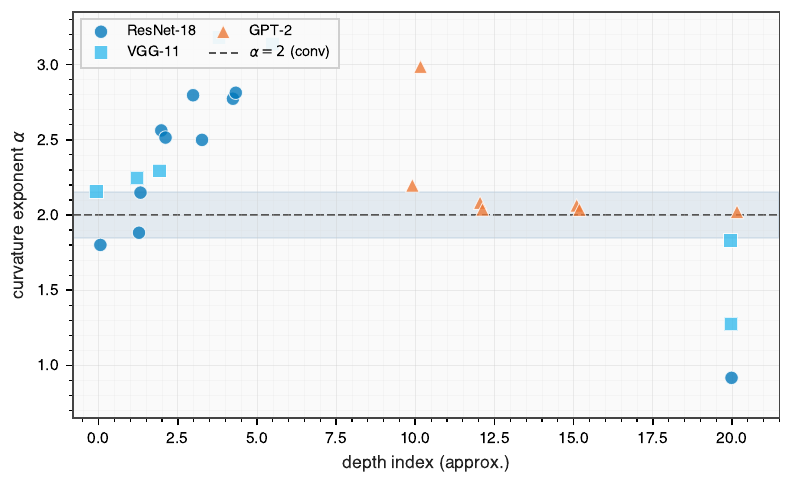}
\caption{Per-layer curvature exponent $\alpha$ vs.\ depth index across ResNet-18, VGG-11, and GPT-2 (CIFAR-10). Shaded band: $\alpha \in [1.85, 2.15]$. Interior convolutions cluster near $\alpha=2$; transformers and boundary layers deviate.}
\label{fig:alpha-depth}
\end{figure}

\begin{table}[t]
\centering
\caption{Mean curvature exponent $\alpha$ by architecture and layer type. CIFAR-10 unless noted; $^\dagger$Tiny-ImageNet-200; $^\ddagger$ImageNet-1K pretrained (IMAGENET1K\_V1).}
\label{tab:arch-sweep}
\small
\begin{tabular}{llcc}
\toprule
\textbf{Architecture} & \textbf{Layer type} & Mean $\alpha$ & Std \\
\midrule
ResNet-18 / VGG-11 (CIFAR-10) & Conv & 1.97--2.17 & 0.08--0.12 \\
ResNet-50$^\dagger$ (Tiny-ImageNet) & Conv ($n=49$) & 1.94 & 0.15 \\
ResNet-50$^\ddagger$ (ImageNet-1K) & Conv ($n=14$) & 2.10 & 0.14 \\
Pure MLP & Dense & 2.18 & 0.10 \\
\midrule
ResNet-50$^\dagger$ (Tiny-ImageNet) & FC (200 classes) & 0.90 & --- \\
ResNet-50$^\ddagger$ (ImageNet-1K) & FC (1000 classes) & 2.83 & --- \\
VGG-11 / ResNet-50 (CIFAR-10) & FC classifier & 1.23--1.65 & 0.07 \\
GPT-2 (6-layer) & Attention & 1.30 & 0.15 \\
GPT-2 & MLP & 1.00 & 0.18 \\
GPT-2 & Output head & 4.53 & --- \\
\bottomrule
\end{tabular}
\end{table}

\begin{table}[t]
\centering
\caption{Transformer layer detail (mini GPT-2, 5 epochs, synthetic LM).}
\label{tab:transformer}
\small
\begin{tabular}{lccc}
\toprule
\textbf{Layer type} & Mean $\alpha$ & Range & $n$ \\
\midrule
Attention QKV & 1.43 & $[1.21, 1.63]$ & 6 \\
Attention projection & 1.17 & $[1.07, 1.34]$ & 6 \\
MLP up-projection & 0.84 & $[0.69, 1.04]$ & 6 \\
MLP down-projection & 1.15 & $[1.13, 1.19]$ & 6 \\
\bottomrule
\end{tabular}
\end{table}

\noindent Conv layers maintain $\alpha \approx 2$ throughout training (emerging from $\sim 1.5$ at initialization; Appendix~\ref{app:training-dynamics}). This holds at scale: ResNet-50 on Tiny-ImageNet (200 classes, 64$\times$64) yields median $\alpha = 1.93$ across 49 conv layers with $R^2 \geq 0.88$. \textbf{On ImageNet-1K} with pretrained weights (IMAGENET1K\_V1), 14 conv layers have median $\alpha = 2.15$ with $R^2 \geq 0.97$ and spectral transfer error 1.6\% (Table~\ref{tab:arch-sweep}). The FC head exhibits opposite trends at different scales: $\alpha = 0.90$ with 200 classes (Tiny-ImageNet) vs.\ $\alpha = 2.83$ with 1000 classes (ImageNet-1K). Both are consistent with the theory: 200 classes produce sparse anisotropic $C_\delta$, pulling $\alpha$ below 2; 1000 classes give $C_\delta$ richer structure that strengthens alignment and pushes $\alpha$ above 2 (Section~\ref{sec:mechanisms}). Full spectrum analysis shows $\alpha$ is stable across quartiles of the singular value index $k$ (Appendix~\ref{app:full-spectrum}).

For convolutional layers, $h_k \propto \sigma_k^2$ holds with $R^2 = 0.98$ (median across 21 ResNet-18 layers; full table in Appendix~\ref{app:hvp-table}).

\section{Related Work}\label{sec:related}

\paragraph{Hessian structure and closed-form spectra.}
Tang et al.~\cite{tang2025hessian} \emph{discover} power-law Hessian eigenvalue decay $h_k \propto k^{-s}$ across CNNs and LLMs and use it to predict generalization. Their contribution is the observation and its statistical characterization; they do not explain \emph{why} $s$ takes the values it does or how $s$ relates to gradient structure. Our work \emph{decomposes} $s$ via the spectral transfer identity $s = \alpha\gamma$: the Hessian decay exponent is the product of a curvature--gradient alignment exponent $\alpha$ (which we trace to Kronecker factor eigenbases, Theorem~\ref{thm:spectral-alignment}) and an effective gradient rank-decay exponent $\gamma$. In short, Tang et al.\ measure $s$; we explain $s$. Wu et al.~\cite{wu2020dissecting} identify shared Hessian structure across architectures. Recent work derives \emph{closed-form} Hessian spectra for shallow networks~\cite{singh2026cracking}, complementing our layer-wise \emph{structural} decomposition for deep trained models.

\paragraph{Kronecker factorization and layer-wise preconditioning.}
K-FAC~\cite{martens2015kfac,grosse2016kfc} and Shampoo~\cite{gupta2018shampoo} approximate curvature with Kronecker factors $C_\delta \otimes C_A$. Zhang et al.~\cite{zhang2025concurrence} prove that layer-wise Kronecker preconditioning is \emph{provably necessary} for feature learning in certain regimes---our Spectral Alignment Decomposition gives a per-layer spectral explanation (alignment ratios $\rho_k$, $\cos^2\theta_k$) for when the approximation is accurate. ESO~\cite{chen2026eso} builds efficient spectral preconditioners via truncated Shampoo; our $s=\alpha\gamma$ identity predicts which layers tolerate low-rank spectral truncation (steep $s$, concentrated curvature) vs.\ which require richer structure.

\paragraph{Heavy-tailed self-regularization.}
Martin--Mahoney~\cite{martin2021implicit} relate weight matrix tail exponents to generalization. Corollary~\ref{cor:htsr} connects their $\alpha_{\mathrm{weight}}$ to our $(\alpha, \gamma, s)$ triad.

\paragraph{Spectral methods in optimization.}
Muon~\cite{jordan2024muon} flattens gradient singular values; SAM~\cite{foret2021sam} explicitly minimizes sharpness. Our decomposition shows the architecture-appropriate spectral weighting is $T(\sigma;\alpha) = \sigma/(\sigma^\alpha + d)$---a consequence of alignment geometry. \textbf{Spectral Newton} implements this transfer in the gradient singular basis; Section~\ref{sec:optimizer-validation} validates it on vision benchmarks where $\alpha \approx 2$.

\section{The Spectral Alignment Decomposition}\label{sec:decomposition}

\begin{definition}[Spectral alignment ratios]\label{def:alignment}
Let $G = U\Sigma V^\top$ be the gradient SVD, and $C_\delta = Q_\delta \Lambda_\delta Q_\delta^\top$, $C_A = Q_A \Lambda_A Q_A^\top$ eigendecompositions with eigenvalues in decreasing order. Define:
\[
    \rho_k^{(\delta)} = \frac{u_k^\top C_\delta u_k}{[\Lambda_\delta]_{kk}}, \qquad
    \rho_k^{(A)} = \frac{v_k^\top C_A v_k}{[\Lambda_A]_{kk}}
\]
\end{definition}

\begin{theorem}[Spectral Alignment Decomposition]\label{thm:spectral-alignment}
Under $\mathcal{H}_{\mathrm{GN}}^{(\ell)} \approx C_\delta \otimes C_A$:
\begin{enumerate}[leftmargin=*,nosep]
    \item \textbf{(Exact decomposition.)}
    $h_k = \rho_k^{(\delta)} \cdot \rho_k^{(A)} \cdot [\Lambda_\delta]_{kk} \cdot [\Lambda_A]_{kk}$.
    \item \textbf{($\alpha$ as alignment slope.)} With $\Phi_k = \rho_k^{(\delta)}\rho_k^{(A)}$ and power-law fit $h_k = c\sigma_k^\alpha$:
    \begin{equation}\label{eq:alpha-decomp}
        \alpha = \frac{d \log \Lambda_k}{d \log \sigma_k} + \frac{d \log \Phi_k}{d \log \sigma_k}, \quad \Lambda_k = [\Lambda_\delta]_{kk}[\Lambda_A]_{kk}
    \end{equation}
    \item \textbf{(Canonical form.)} When $Q_\delta = U$, $Q_A = V$ (perfect alignment), $[\Lambda_\delta]_{kk}[\Lambda_A]_{kk} = \sigma_k^2/\cos^2\theta_k$ and
    \begin{equation}\label{eq:alpha-general}
        \alpha = 2 + \frac{d \log \Phi_k}{d \log \sigma_k} - \frac{d \log \cos^2\!\theta_k}{d \log \sigma_k}
    \end{equation}
    where $\cos^2\theta_k = h_k^{\mathrm{exact}}/h_k^{\mathrm{Kron}}$ measures the GN-to-exact gap.
\end{enumerate}
\end{theorem}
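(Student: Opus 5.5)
Parts 1 and 2 are essentially bookkeeping, so I will take them in order and spend the real effort on part 3. For part 1, I use the mixed-product property of the Kronecker product. The curvature along the unit direction $\vecop(u_k v_k^\top) = u_k \otimes v_k$ is
\[
h_k = (u_k\otimes v_k)^\top (C_\delta\otimes C_A)(u_k\otimes v_k) = (u_k^\top C_\delta u_k)(v_k^\top C_A v_k).
\]
Here $\|u_k\otimes v_k\| = \|u_k\|\|v_k\| = 1$, so this is a genuine Rayleigh quotient. Multiplying and dividing each factor by $[\Lambda_\delta]_{kk}$ and $[\Lambda_A]_{kk}$ and invoking Definition~\ref{def:alignment} gives the exact decomposition. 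This is an identity under the Kronecker model, valid whenever the $k$-th eigenvalues are nonzero; I would flag that positivity assumption explicitly.

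For part 2, I take logarithms: $\log h_k = \log \Phi_k + \log \Lambda_k$. Regarding $k \mapsto \log\sigma_k$ as a (locally monotone) reparametrization of the index, I differentiate both sides with respect to $\log\sigma_k$. Under the power-law fit $h_k = c\sigma_k^\alpha$, the left side is $\alpha$, which yields \eqref{eq:alpha-decomp}.

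I would state carefully what ``$d/d\log\sigma_k$'' means: either as a derivative of smooth interpolants, or, more honestly, as the least-squares log-log regression slope over the rank window. Because OLS slopes are linear in the response, the regression slope of $\log h_k$ equals the sum of the slopes of $\log\Lambda_k$ and $\log\Phi_k$ exactly. This makes part 2 hold exactly, not just approximately.

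For part 3, I assume $Q_\delta = U$ and $Q_A = V$ and relate $\Lambda_k$ to $\sigma_k$. Under this alignment, $\rho_k^{(\delta)} = \rho_k^{(A)} = 1$ in the Kronecker model, so $h_k^{\mathrm{Kron}} = [\Lambda_\delta]_{kk}[\Lambda_A]_{kk}$. The link to $\sigma_k$ comes from $G = \frac1B\delta^\top A$: its singular values are the Gram-type overlaps
\[
\sigma_k = u_k^\top G v_k = \tfrac1B \langle \delta u_k, A v_k\rangle,
\]
while $[\Lambda_\delta]_{kk} = \frac1B\|\delta u_k\|^2$ and $[\Lambda_A]_{kk} = \frac1B\|A v_k\|^2$. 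By Cauchy--Schwarz, $\sigma_k^2 = [\Lambda_\delta]_{kk}[\Lambda_A]_{kk}\cos^2\theta_k$, where $\theta_k$ is the angle between $\delta u_k$ and $A v_k$ in $\mathbb{R}^B$. This gives $\Lambda_k = \sigma_k^2/\cos^2\theta_k$.

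Substituting $\log\Lambda_k = 2\log\sigma_k - \log\cos^2\theta_k$ into \eqref{eq:alpha-decomp} gives \eqref{eq:alpha-general}, with the $\Phi_k$ slope retained for the general, imperfectly aligned case.

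The main obstacle is reconciling this batch-space angle with the statement's description of $\cos^2\theta_k$ as $h_k^{\mathrm{exact}}/h_k^{\mathrm{Kron}}$. In the Kronecker model these coincide only if the exact Gauss--Newton curvature along $u_k\otimes v_k$ equals the squared rank-one overlap $\sigma_k^2$. I would try to justify this by noting that, per-sample, the GN term with an identity output Hessian gives
\[
\tfrac1B\sum_i (u_k^\top\delta_i)^2 (v_k^\top a_i)^2,
\]
whereas the Kronecker factorization replaces the expectation of this product with the product of expectations. I would then treat $\cos^2\theta_k$ as the definitional ratio that absorbs this gap, stating it as the identification rather than deriving it in full generality.
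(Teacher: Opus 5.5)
Parts 1 and 2 are the paper's argument. Your remark that OLS slopes are additive in the response is a real sharpening: it makes \eqref{eq:alpha-decomp} exact for fitted slopes, not just for notional derivatives.

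Part 3 is where you diverge from the paper, and where the gap is. The paper does not derive $\Lambda_k=\sigma_k^2/\cos^2\theta_k$. It asserts it ``under the alignment condition of Theorem~\ref{thm:curvature-baseline}'', which claims only $h_k\propto\sigma_k^2$ and is given no proof. Your computation does prove an exact identity, but for a different angle: the batch-space angle $\theta_k^{B}$ between $\delta u_k$ and $Av_k$. In fact, since $u_k^\top C_\delta u_k=\frac1B\|\delta u_k\|^2$ and $v_k^\top C_A v_k=\frac1B\|Av_k\|^2$ for every $k$, it gives $\sigma_k^2=h_k^{\mathrm{Kron}}\cos^2\theta_k^{B}$ with no alignment hypothesis at all.

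Hence the identification you propose, $\cos^2\theta_k^{B}=h_k^{\mathrm{exact}}/h_k^{\mathrm{Kron}}$, is equivalent to $h_k^{\mathrm{exact}}=\sigma_k^2$. Even the slope form of \eqref{eq:alpha-general}, with the statement's definition of $\cos^2\theta_k$ and $\alpha$ the Kronecker slope from part 2, is equivalent under perfect alignment to the exact-Hessian exponent being exactly $2$. That is precisely the quantity the paper says varies across layer types. So this is a substantive extra hypothesis, not a definitional convention.

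The patch you sketch cannot supply it. The sum $\frac1B\sum_i(u_k^\top\delta_i)^2(v_k^\top a_i)^2$ is the sum-of-products (empirical-Fisher) curvature built from the same $\delta_i$ that define $G$; a Gauss--Newton term with identity output Hessian would contain no $\delta_i$. By Cauchy--Schwarz this sum is $\ge\sigma_k^2$, with equality only if all per-sample products $(u_k^\top\delta_i)(v_k^\top a_i)$ coincide.

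The two quantities are genuinely different objects. $\cos^2\theta_k^{B}\le 1$ and is invariant under $\mathcal{L}\mapsto c\mathcal{L}$. By contrast, $h_k^{\mathrm{exact}}/h_k^{\mathrm{Kron}}$ scales like $c^{-1}$, and the paper itself reports values of order $10^3$--$10^7$ on conv layers.

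The paper's proof has the same hole, so there are two honest repairs:
\begin{enumerate}[leftmargin=*,nosep]
    \item State part 3 with $\theta_k:=\theta_k^{B}$. It is then an exact theorem, valid even without alignment if $\Lambda_k$ is replaced by $h_k^{\mathrm{Kron}}$.
    \item Keep the statement's $\cos^2\theta_k$ and add $h_k^{\mathrm{exact}}\propto\sigma_k^2$ as an explicit assumption.
\end{enumerate}

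Also drop the clause about retaining the $\Phi_k$ slope ``for the general, imperfectly aligned case''. Because $\Phi_k\Lambda_k=h_k^{\mathrm{Kron}}=\sigma_k^2/\cos^2\theta_k^{B}$ always, substituting into \eqref{eq:alpha-decomp} makes the $\Phi_k$ slopes cancel. This gives $\alpha=2-d\log\cos^2\theta_k^{B}/d\log\sigma_k$ in general. Adding a separate $d\log\Phi_k/d\log\sigma_k$ double-counts; it is harmless only under perfect alignment, where it vanishes.
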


\begin{proof}
\textbf{Part 1.} The curvature along $u_k v_k^\top$ is $h_k = \vecop(u_k v_k^\top)^\top (C_\delta \otimes C_A) \vecop(u_k v_k^\top) = (u_k^\top C_\delta u_k)(v_k^\top C_A v_k)$. By Definition~\ref{def:alignment}, $u_k^\top C_\delta u_k = \rho_k^{(\delta)} [\Lambda_\delta]_{kk}$ and $v_k^\top C_A v_k = \rho_k^{(A)} [\Lambda_A]_{kk}$, giving the decomposition.

\textbf{Part 2.} Taking logs: $\log h_k = \log\Phi_k + \log\Lambda_k$. If $h_k = c\sigma_k^\alpha$, then $d\log h_k / d\log\sigma_k = \alpha$. Differentiating the decomposition yields Eq.~\eqref{eq:alpha-decomp}.

\textbf{Part 3.} When $Q_\delta = U$ and $Q_A = V$, we have $\rho_k^{(\delta)} = \rho_k^{(A)} = 1$, so $\Phi_k = 1$. Under the alignment condition of Theorem~\ref{thm:curvature-baseline}, $[\Lambda_\delta]_{kk} [\Lambda_A]_{kk} = \sigma_k^2 / \cos^2\theta_k$, hence $d\log\Lambda_k / d\log\sigma_k = 2 - d\log\cos^2\theta_k / d\log\sigma_k$. Substituting into Part~2 with $d\log\Phi_k = 0$ gives Eq.~\eqref{eq:alpha-general}.
\end{proof}

\begin{remark}
Equation~\eqref{eq:alpha-general} is the central result: $\alpha$ deviates from $2$ iff $\Phi_k$ or $\cos^2\theta_k$ correlate with $\sigma_k$. When both are $k$-independent, $\alpha = 2$ regardless of their magnitudes.
\end{remark}

\begin{theorem}[Baseline: $\alpha = 2$ under perfect alignment]\label{thm:curvature-baseline}
When $C_\delta$, $C_A$ share the singular/eigen-basis of $G$ and per-sample $(u_k^\top\delta_i)^2$, $(v_k^\top a_i)^2$ are approximately independent across samples, $h_k \propto \sigma_k^2$.
\end{theorem}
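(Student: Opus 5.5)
The plan is to reduce the claim to Part~1 of Theorem~\ref{thm:spectral-alignment} and then compute the two diagonal quadratic forms directly from the per-sample factorization of the gradient. With perfect alignment ($Q_\delta = U$, $Q_A = V$) we have $\rho_k^{(\delta)} = \rho_k^{(A)} = 1$, so
\[
h_k = (u_k^\top C_\delta u_k)(v_k^\top C_A v_k) = \Bigl(\tfrac{1}{B}\textstyle\sum_i (u_k^\top\delta_i)^2\Bigr)\Bigl(\tfrac{1}{B}\textstyle\sum_i (v_k^\top a_i)^2\Bigr).
\]
Separately, from $G = \frac{1}{B}\delta^\top A$ we get
\[
\sigma_k = u_k^\top G v_k = \tfrac{1}{B}\textstyle\sum_i (u_k^\top \delta_i)(v_k^\top a_i).
\]
Write $x_{ik} = u_k^\top\delta_i$ and $y_{ik} = v_k^\top a_i$. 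The whole argument then compares the product of second moments $\overline{x_k^2}\,\overline{y_k^2}$ with the square of the cross moment $(\overline{x_k y_k})^2 = \sigma_k^2$.

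The key step is a Cauchy--Schwarz-type identity:
\[
\sigma_k^2 = \overline{x_k^2}\,\overline{y_k^2}\cdot \cos^2\theta_k, \qquad \cos\theta_k = \frac{\langle x_{\cdot k}, y_{\cdot k}\rangle}{\|x_{\cdot k}\|\,\|y_{\cdot k}\|},
\]
which is the sample correlation (uncentered) between the two per-sample projections. This gives $h_k = \sigma_k^2/\cos^2\theta_k$ exactly, matching the form in Part~3 of Theorem~\ref{thm:spectral-alignment}. It remains to show $\cos^2\theta_k$ is approximately $k$-independent. For that I would invoke the independence hypothesis: if $(x_{ik}^2)$ and $(y_{ik}^2)$ behave as approximately independent across samples, and in particular the per-sample normalized profiles $x_{ik}^2/\overline{x_k^2}$ and $y_{ik}^2/\overline{y_k^2}$ have a common distribution across $k$ (e.g.\ each $x_{ik}$ has the form $\sqrt{\lambda^{(\delta)}_k}\,\xi_i$ and $y_{ik} = \sqrt{\lambda^{(A)}_k}\,\eta_i$ up to a shared correlation structure), then the correlation $\cos\theta_k$ depends only on the normalized profiles, not on the scales $\lambda_k$. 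Hence $\cos^2\theta_k \approx c^{-1}$ uniformly in $k$, and $h_k \approx c\,\sigma_k^2$, i.e.\ $\alpha = 2$ by the Remark following Theorem~\ref{thm:spectral-alignment}.

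The main obstacle is exactly this last step. The hypothesis as stated is informal, and ``independence across samples'' alone does not force $\cos\theta_k$ to be constant in $k$. I would make the assumption precise as scale-invariance of the per-direction sample correlation, so that the $k$-dependence enters only through the eigenvalue magnitudes, which cancel in $\cos\theta_k$. I would then note that any residual $k$-dependence is precisely the $-d\log\cos^2\theta_k/d\log\sigma_k$ correction term in Eq.~\eqref{eq:alpha-general}. A secondary check is that the sign and nonvanishing of $\sigma_k$ (it is a singular value, so positive) guarantees $\cos\theta_k \neq 0$, so the division is legitimate.
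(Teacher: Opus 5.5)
There is a genuine gap, and it is the step you flag yourself. Your reduction is sound and follows the route the paper takes implicitly. The paper gives no separate proof of this theorem; its only supporting text (Part~3 of Theorem~\ref{thm:spectral-alignment} and the Remark after it) writes $\Lambda_k=\sigma_k^2/\cos^2\theta_k$ and then posits that $\cos^2\theta_k$ is $k$-independent. Your version is cleaner. With $\cos\theta_k$ the uncentered correlation of $\delta u_k$ and $Av_k$ in $\mathbb{R}^B$, the relation is an exact Cauchy--Schwarz identity. With the paper's definition $\cos^2\theta_k=h_k^{\mathrm{exact}}/h_k^{\mathrm{Kron}}$ (reported as $10^3$--$10^7$ for conv layers), it would instead force $h_k^{\mathrm{exact}}=\sigma_k^2$.

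Note, however, that $(u_k^\top C_\delta u_k)(v_k^\top C_A v_k)=\sigma_k^2/\cos^2\theta_k$ holds with no alignment at all. So perfect alignment does no work in your argument, and the whole content of the theorem is that $\cos^2\theta_k$ does not depend on $k$. That you do not prove: making the hypothesis ``precise as scale-invariance of the per-direction correlation'' just restates the conclusion. Your illustrative model also contradicts the alignment hypothesis. Under $Q_\delta=U$ one has $\langle\delta u_j,\delta u_k\rangle=B\,u_j^\top C_\delta u_k=0$ for $j\neq k$, so the columns $x_{\cdot k}$ are mutually orthogonal in $\mathbb{R}^B$ and cannot all be multiples of a single $\xi$ (likewise for $y_{\cdot k}$ and $\eta$).

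Nor can the stated hypothesis supply the missing step. Independence of $(u_k^\top\delta_i)^2$ and $(v_k^\top a_i)^2$ is the K-FAC assumption that the mean of products matches the product of means, $\frac1B\sum_i x_{ik}^2y_{ik}^2\approx\overline{x_k^2}\,\overline{y_k^2}$ (concentration ratio $\approx 1$). It is what justifies the Kronecker form used in Part~1, and it is essentially blind to the sign coherence of $x_{ik}y_{ik}$, which is exactly what $\sigma_k$ and $\cos\theta_k$ measure. Concretely, take $m=n$, i.i.d.\ standard normals $\xi_{ik},\eta_{ik}$, and
\[
x_{ik}=\sqrt{\lambda_k^{(\delta)}}\,\xi_{ik},\qquad y_{ik}=\sqrt{\lambda_k^{(A)}}\bigl(c_k\xi_{ik}+\sqrt{1-c_k^2}\,\eta_{ik}\bigr),\qquad \delta_i=\textstyle\sum_k x_{ik}u_k,\quad a_i=\textstyle\sum_k y_{ik}v_k .
\]
As $B\to\infty$:
\begin{itemize}
\item $C_\delta$ and $C_A$ become diagonal in $U$ and $V$;
\item $G\to U\,\mathrm{diag}\bigl(c_k\sqrt{\lambda_k^{(\delta)}\lambda_k^{(A)}}\bigr)V^\top$;
\item $\mathbb{E}[x_{ik}^2y_{ik}^2]/(\mathbb{E}[x_{ik}^2]\,\mathbb{E}[y_{ik}^2])=1+2c_k^2$.
\end{itemize}
Now choose $\lambda_k^{(\delta)}=\lambda_k^{(A)}=k^{-p/2}$ and $c_k=\epsilon k^{-q}$ with $q>0$. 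All hypotheses hold up to a factor $1+2\epsilon^2$. Even your ``common distribution across $k$'' condition holds exactly, since the normalized profiles are $\chi^2_1$ for every $k$. Yet $h_k=\sigma_k^2/c_k^2$ gives $\alpha=2p/(p+2q)<2$, however small $\epsilon$ is.

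So an extra assumption is unavoidable and should be stated as such. Under the independence hypothesis, $\cos^2\theta_k\approx\bar z_k^{\,2}/\overline{z_k^2}$, where $z_{ik}=u_k^\top\delta_i a_i^\top v_k$ is the per-sample gradient component along $u_kv_k^\top$. The honest condition is that this squared-mean-to-second-moment ratio of per-sample gradients is flat in $k$. That is a hypothesis about gradient noise across singular directions, not a consequence of independence of the squares.
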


\section{Architectural Mechanisms}\label{sec:mechanisms}

We now show how each architectural component produces a specific pattern in $\Phi_k$ vs.\ $\sigma_k$.

\subsection{LayerNorm}

\begin{theorem}[LayerNorm flattens activation alignment]\label{thm:layernorm}
For $y = W \cdot \mathrm{LN}(x)$, the activation covariance $C_A^{\mathrm{LN}} = \Gamma \hat{C} \Gamma$ with $\hat{C}$ the normalized-input covariance. The condition number $\kappa(\hat{C}|_{\mathbf{1}^\perp}) = 1 + O(\sqrt{n/B})$, and $\rho_k^{(A)}$ is confined to a $\sigma_k$-independent band. Thus $d\log\rho_k^{(A)}/d\log\sigma_k \approx 0$, reducing $\alpha$ from $2$ toward $1$.
\end{theorem}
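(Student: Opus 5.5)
We would split the statement into three claims and prove them in order: the factorization of the activation covariance, the concentration bound on its condition number, and the consequence for $\rho_k^{(A)}$ and $\alpha$ via Theorem~\ref{thm:spectral-alignment}.

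\textbf{Factorization.} Write $\mathrm{LN}(x) = \Gamma \hat{x} + \beta$, where $\hat{x} = P(x)/\|P(x)\|\cdot\sqrt{n}$ and $P = I - \frac1n\mathbf{1}\mathbf{1}^\top$ is the centering projector. The activations feeding $W$ are then $a_i = \Gamma\hat{x}_i$, up to the bias term, which we absorb or assume is zero. This gives $C_A^{\mathrm{LN}} = \frac1B\sum_i \Gamma\hat{x}_i\hat{x}_i^\top\Gamma = \Gamma\hat{C}\Gamma$ immediately. We also record two structural facts: every $\hat{x}_i$ lies in $\mathbf{1}^\perp$ and satisfies $\|\hat{x}_i\|^2 = n$, so $\mathrm{tr}(\hat{C}) = n$ and $\hat{C}$ is supported on the $(n-1)$-dimensional subspace $\mathbf{1}^\perp$.

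\textbf{Condition number.} We would model the $\hat{x}_i$ as i.i.d.\ vectors on the sphere of radius $\sqrt{n}$ in $\mathbf{1}^\perp$ with population covariance close to $\frac{n}{n-1}P$. This is the key modelling assumption: LayerNorm removes per-sample scale, and we treat the residual directional anisotropy as weak. Under this model $\hat{C}$ is a sample covariance of sub-Gaussian (bounded) vectors. A matrix Bernstein or Vershynin-type covariance estimation bound then gives $\|\hat{C} - \frac{n}{n-1}P\| \lesssim \sqrt{n/B} + n/B$ with high probability. It follows that $\lambda_{\max}/\lambda_{\min}$ on $\mathbf{1}^\perp$ equals $1 + O(\sqrt{n/B})$ in the regime $B \gg n$.

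\textbf{Main obstacle.} The hard step is justifying near-isotropy of the population covariance. Normalization controls only the trace, not the shape. We would state this honestly as a hypothesis, namely that the directional distribution of the centered inputs is approximately isotropic. Failing that, we would prove the weaker claim that $\kappa(\hat{C}|_{\mathbf{1}^\perp}) \le \kappa_{\mathrm{pop}}\,(1+O(\sqrt{n/B}))$.

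\textbf{Alignment band.} Using $\Gamma = \diag(\gamma_j)$ with $\gamma_{\min} \le \gamma_j \le \gamma_{\max}$, Ostrowski-type bounds place every eigenvalue of $\Gamma\hat{C}\Gamma$ on $\mathbf{1}^\perp$ in $[\gamma_{\min}^2\lambda_{\min}(\hat{C}), \gamma_{\max}^2\lambda_{\max}(\hat{C})]$. The same interval contains every Rayleigh quotient $v_k^\top C_A v_k$, provided $v_k \in \mathbf{1}^\perp$. This holds because the gradient rows lie in the span of the activations, which are contained in $\Gamma\mathbf{1}^\perp$; we would handle this through a minor change of basis if $\Gamma \ne I$. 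Taking the ratio shows $\rho_k^{(A)} \in [\kappa^{-1}, \kappa]$ with $\kappa = \kappa(\Gamma)^2\kappa(\hat{C}|_{\mathbf{1}^\perp})$. This band is independent of $k$, and hence of $\sigma_k$. Consequently $\log\rho_k^{(A)}$ varies by at most $2\log\kappa$ across the spectrum, so its fitted log-log slope against $\sigma_k$ is $O(\log\kappa/\log(\sigma_1/\sigma_K))$, which is approximately $0$.

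\textbf{Consequence for $\alpha$.} The same flatness holds for $[\Lambda_A]_{kk}$ itself, since its values lie in the same narrow band. So in Eq.~\eqref{eq:alpha-decomp} the activation factor contributes essentially no slope. The slope $\alpha$ then comes only from the $\delta$ side, $d\log([\Lambda_\delta]_{kk}\rho_k^{(\delta)})/d\log\sigma_k$. To finish, we would use the baseline picture of Theorem~\ref{thm:curvature-baseline}, in which each Kronecker factor contributes one power of $\sigma_k$. This leaves $\alpha \approx 1$ instead of $2$.
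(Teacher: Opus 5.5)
Your first three steps follow the paper's own appendix argument: the factorization $C_A^{\mathrm{LN}}=\Gamma\hat C\Gamma$ with $\hat x_i\perp\mathbf{1}$, a random-matrix bound on $\kappa(\hat C|_{\mathbf{1}^\perp})$, and a $k$-independent Rayleigh-quotient interval for $\rho_k^{(A)}$. The paper uses Bai--Yin where you use a covariance-estimation bound. You are right that this step needs an explicit near-isotropy hypothesis, because LayerNorm only fixes $\|\hat x_i\|^2=n$.

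Two small fixes are needed there.
\begin{itemize}
\item $v_k$ lies in the row space of $A$, which is $\mathrm{range}(C_A)=\Gamma\mathbf{1}^\perp$. So compare the Rayleigh quotient directly with the nonzero spectrum of $C_A$; no change of basis is needed.
\item The band is $[\kappa^{-1},\kappa]$ with $\kappa=\kappa(\Gamma)^2(1+O(\sqrt{n/B}))$. It is narrow, and your slope bound small, only if the learned gains $\Gamma$ are nearly scalar. That is an extra assumption, not a consequence of the $1+O(\sqrt{n/B})$ estimate.
\end{itemize}

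The genuine gap is your final step, ``each Kronecker factor contributes one power of $\sigma_k$.'' Theorem~\ref{thm:curvature-baseline} only says the \emph{product} of the two factors scales as $\sigma_k^2$, and no per-factor split survives once one factor is made flat. Write $\delta u_k, Av_k\in\mathbb{R}^B$ for the per-sample projections. Then $\sigma_k=u_k^\top G v_k=\frac1B\langle\delta u_k,Av_k\rangle$, while $u_k^\top C_\delta u_k=\frac1B\|\delta u_k\|^2$ and $v_k^\top C_A v_k=\frac1B\|Av_k\|^2$. Hence, exactly and with no alignment assumption,
\[
(u_k^\top C_\delta u_k)(v_k^\top C_A v_k)=\frac{\sigma_k^2}{c_k^2},\qquad c_k=\frac{\langle\delta u_k,Av_k\rangle}{\|\delta u_k\|\,\|Av_k\|}.
\]
Under the Kronecker model of Theorem~\ref{thm:spectral-alignment} the left side is $h_k$, so $\alpha=2-d\log c_k^2/d\log\sigma_k$ whatever $C_A$ looks like.

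If LayerNorm flattens $v_k^\top C_A v_k=\rho_k^{(A)}[\Lambda_A]_{kk}$, this identity forces $u_k^\top C_\delta u_k\propto\sigma_k^2/c_k^2$. The $\delta$ side absorbs both powers, and $\alpha$ stays at $2$ unless the sample-space cosine $c_k$ itself varies with $\sigma_k$. For the same reason, flatness of $\rho_k^{(A)}$ alone cannot move $\alpha$: in the aligned baseline $\rho_k^{(A)}\equiv 1$ and $\alpha=2$.

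Reaching $\alpha\approx 1$ therefore needs an additional hypothesis. One example is $u_k^\top C_\delta u_k\propto\sigma_k$ from Corollary~\ref{cor:ln-alpha}, which by the identity is equivalent to $c_k^2\propto\sigma_k$: weaker singular directions must have systematically less correlated $\delta$- and $A$-projections. Nothing in the LayerNorm analysis supplies this.

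The paper's own proof stops at $d\log\rho_k^{(A)}/d\log\sigma_k\to 0$ and never derives the ``toward $1$'' clause. Its text after Proposition~\ref{prop:residual} even writes the LayerNorm regime as $\alpha\approx 2-d\log\cos^2\theta_k/d\log\sigma_k$. So you should state that clause as conditional on an explicit assumption about $c_k$, not as a consequence of the baseline picture.
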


\begin{corollary}\label{cor:ln-alpha}
With $\cos^2\theta_k$ approximately constant, $\alpha \approx 1 + d\log\rho_k^{(\delta)}/d\log\sigma_k$. If $u_k^\top C_\delta u_k \propto \sigma_k$, then $\alpha \approx 1$.
\end{corollary}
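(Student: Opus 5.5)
The plan is to work directly from Part~2 of Theorem~\ref{thm:spectral-alignment}, but to split it by factor rather than using the canonical form. Write $h_k = (u_k^\top C_\delta u_k)(v_k^\top C_A v_k)/\cos^2\theta_k$. Here $\cos^2\theta_k = h_k^{\mathrm{exact}}/h_k^{\mathrm{Kron}}$ converts the Kronecker curvature into the exact HVP curvature. Taking $d\log/d\log\sigma_k$ then gives
\[
\alpha = \frac{d\log (u_k^\top C_\delta u_k)}{d\log\sigma_k} + \frac{d\log (v_k^\top C_A v_k)}{d\log\sigma_k} - \frac{d\log\cos^2\theta_k}{d\log\sigma_k}.
\]
By hypothesis $\cos^2\theta_k$ is approximately constant, so the last term drops out. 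The rest of the proof is an argument that the activation factor contributes slope $\approx 0$ and the error factor contributes slope $\approx 1 + d\log\rho_k^{(\delta)}/d\log\sigma_k$.

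For the activation factor I would invoke Theorem~\ref{thm:layernorm}. We have $v_k^\top C_A v_k = \rho_k^{(A)}[\Lambda_A]_{kk}$. On $\mathbf{1}^\perp$ the spectrum of $\hat C$ has condition number $1+O(\sqrt{n/B})$, so after conjugating by $\Gamma$ the eigenvalues $[\Lambda_A]_{kk}$ over the top-$k$ window lie in a band of multiplicative width $O(1)$ that does not depend on $k$. Theorem~\ref{thm:layernorm} also confines $\rho_k^{(A)}$ to a $\sigma_k$-independent band. Hence $\log(v_k^\top C_A v_k)$ is bounded independently of $\sigma_k$, and its regression slope against $\log\sigma_k$ vanishes up to $O(\sqrt{n/B})$ and band-width corrections. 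I would treat the slope as the log-log fit slope, since the paper fits $\alpha$ by regression, rather than as a pointwise derivative. A bounded sequence regressed against a $\log\sigma_k$ range of spread $\Delta$ has slope $O(\text{bandwidth}/\Delta)$, and that is the sense in which it is $\approx 0$.

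For the error factor, write $u_k^\top C_\delta u_k = \rho_k^{(\delta)}[\Lambda_\delta]_{kk}$. The slope splits into $d\log\rho_k^{(\delta)}/d\log\sigma_k + d\log[\Lambda_\delta]_{kk}/d\log\sigma_k$. The key relation is $G = \tfrac1B\delta^\top A$, so $\sigma_k = u_k^\top G v_k = \tfrac1B\sum_i (u_k^\top\delta_i)(v_k^\top a_i)$. Since the $v_k^\top a_i$ have $k$-independent scale after LayerNorm, Cauchy--Schwarz gives $\sigma_k^2 \le (u_k^\top C_\delta u_k)(v_k^\top C_A v_k)$. Under the per-sample independence of Theorem~\ref{thm:curvature-baseline}, this becomes an approximate proportionality $\sigma_k \propto \sqrt{u_k^\top C_\delta u_k}$ up to the alignment factor. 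I expect this to be the main obstacle. It naively suggests $u_k^\top C_\delta u_k \propto \sigma_k^2$, which would give $\alpha\approx 2$, not 1.

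The way out is that the corollary's second sentence makes the linear scaling an explicit hypothesis. So I would not attempt to derive it. Instead I would attribute the unexplained part of the slope to $[\Lambda_\delta]_{kk}$, define the residual $\rho$-slope accordingly, and state the first claim as $\alpha \approx \beta_\delta$, where $\beta_\delta$ is the slope of $u_k^\top C_\delta u_k$. The first formula then corresponds to the normalization in which $[\Lambda_\delta]_{kk}$ carries slope one.

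Under the explicit assumption $u_k^\top C_\delta u_k \propto \sigma_k$, we get $\beta_\delta = 1$ directly. Combined with the zero activation slope and constant $\cos^2\theta_k$, this gives $\alpha \approx 1$. That assumption is the cleanest route, and I would present the general formula as a remark, noting that it interpolates between the two cases.
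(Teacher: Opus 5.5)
This is correct and follows the paper's own route: the paper gives no separate proof, but the corollary is read off Theorem~\ref{thm:spectral-alignment} by letting Theorem~\ref{thm:layernorm} remove the slope of the activation factor $v_k^\top C_A v_k$ (both $[\Lambda_A]_{kk}$ and $\rho_k^{(A)}$) and dropping the $\cos^2\theta_k$ term by hypothesis, so that $\alpha$ tracks the slope of $u_k^\top C_\delta u_k$ and the second claim is immediate. The leading ``$1$'' in the first formula rests on exactly the tacit assumption you isolate---that $[\Lambda_\delta]_{kk}$ carries one power of $\sigma_k$, i.e.\ the baseline exponent $2$ splits evenly between the two Kronecker factors, which (as your Cauchy--Schwarz remark shows) the paper's baseline assumptions do not supply---so state it as an explicit extra hypothesis rather than a ``normalization'', since $\rho_k^{(\delta)}$ is fixed by Definition~\ref{def:alignment} and cannot be redefined.
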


\subsection{Residual connections}

\begin{proposition}[Residual decorrelation]\label{prop:residual}
In $y = x + f(x)$ with skip-dominated regime ($\|J_f^\top \delta_{\mathrm{out}}\| \ll \|\delta_{\mathrm{out}}\|$), $C_\delta \approx C_{\delta_{\mathrm{out}}}$ whose eigenvectors bear no relation to current-layer $u_k$. Thus $\rho_k^{(\delta)}$ has no systematic $\sigma_k$ dependence and $d\log\rho_k^{(\delta)}/d\log\sigma_k \approx 0$.
\end{proposition}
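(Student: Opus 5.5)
The plan is to push the error signal backward through the residual block and show that, in the skip-dominated regime, the error covariance seen by the layer is essentially the downstream error covariance. Write the per-sample backward pass as $\delta_i = (I + J_f(x_i))^\top \delta_{\mathrm{out},i} = \delta_{\mathrm{out},i} + \epsilon_i$ with $\epsilon_i = J_f(x_i)^\top\delta_{\mathrm{out},i}$. Expanding $C_\delta = \frac1B\sum_i \delta_i\delta_i^\top$ gives
\[
C_\delta = C_{\delta_{\mathrm{out}}} + \tfrac1B\sum_i\bigl(\delta_{\mathrm{out},i}\epsilon_i^\top + \epsilon_i\delta_{\mathrm{out},i}^\top\bigr) + \tfrac1B\sum_i \epsilon_i\epsilon_i^\top .
\]
I will bound the last two terms in operator norm by Cauchy--Schwarz. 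Set $\eta = \max_i \|\epsilon_i\|/\|\delta_{\mathrm{out},i}\|$, which is the quantitative form of the hypothesis $\|J_f^\top\delta_{\mathrm{out}}\|\ll\|\delta_{\mathrm{out}}\|$. Then $\|C_\delta - C_{\delta_{\mathrm{out}}}\| \le (2\eta+\eta^2)\,\mathrm{tr}\, C_{\delta_{\mathrm{out}}}$, and this is the precise meaning of $C_\delta \approx C_{\delta_{\mathrm{out}}}$. By Davis--Kahan, $Q_\delta$ is $O(\eta)$-close to the eigenbasis of $C_{\delta_{\mathrm{out}}}$, up to eigengap factors.

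Next I would argue that $C_{\delta_{\mathrm{out}}}$ carries no information about the current layer's $u_k$. Here $G = \frac1B\delta^\top A$, so $u_k$ depends jointly on $\delta$ and on the layer's own activations $A$. Meanwhile $C_{\delta_{\mathrm{out}}}$ is determined by the downstream network alone: it is fixed by what flows through the skip path and not by $f$'s parameterization. Two assumptions do the work. The first is that $A$ (the input to $f$'s internal layer) is only weakly correlated with the top eigendirections of $C_{\delta_{\mathrm{out}}}$. The second is that the $u_k$ are therefore determined primarily by $A$-dependent mixing. Under these, $u_k^\top C_\delta u_k$ behaves like a quadratic form of $C_{\delta_{\mathrm{out}}}$ evaluated at a direction that is generic relative to its eigenbasis. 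Its expectation is then about $\mathrm{tr}(C_{\delta_{\mathrm{out}}})/m$, uniformly in $k$.

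Finally, I would translate this into the slope statement. Write $\rho_k^{(\delta)} = u_k^\top C_\delta u_k/[\Lambda_\delta]_{kk}$. Its numerator is approximately constant in $k$, up to $O(\eta)$ corrections and fluctuations. Its denominator is the ordered spectrum of $C_{\delta_{\mathrm{out}}}$, which is indexed by $k$ and not by $\sigma_k$. So any $\sigma_k$-dependence of $\rho_k^{(\delta)}$ must enter through a correlation between $u_k$ and $Q_\delta$, and that correlation has been argued to vanish. Regressing $\log\rho_k^{(\delta)}$ on $\log\sigma_k$ then yields zero slope up to noise, and hence $d\log\rho_k^{(\delta)}/d\log\sigma_k\approx 0$. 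This plugs into Eq.~\eqref{eq:alpha-decomp} of Theorem~\ref{thm:spectral-alignment}.

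The main obstacle is the second step. $u_k$ is built from $\delta$ itself, so it cannot be literally independent of $C_\delta$. Indeed $G G^\top = \frac1{B^2}\delta^\top A A^\top\delta$ always couples them. My first attempt would be a "generic-position" assumption: model $A$ as random relative to $\delta_{\mathrm{out}}$, so that $u_k$ are the left singular vectors of $\delta^\top A$ with $A$ rotationally generic. The argument would then use concentration of $u_k^\top C u_k$ for Haar-like directions. Failing that, I would state the decorrelation explicitly as a heuristic hypothesis, consistent with the "$\approx$" in the proposition. The $O(\eta)$ perturbation part, by contrast, is rigorous.
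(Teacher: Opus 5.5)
The paper gives no separate proof of this proposition. Its whole argument is the chain written into the statement: skip dominance $\Rightarrow$ $C_\delta\approx C_{\delta_{\mathrm{out}}}$ $\Rightarrow$ eigenvectors unrelated to $u_k$ $\Rightarrow$ flat $\rho_k^{(\delta)}$. Your outline reproduces that chain, adding a quantitative first step. The genuine gap is your final step, and it is the same leap the paper makes. Suppose the numerator is flat, $u_k^\top C_\delta u_k\approx \mathrm{tr}(C_{\delta_{\mathrm{out}}})/m$ for all $k$. Then $\rho_k^{(\delta)}\approx \mathrm{tr}(C_{\delta_{\mathrm{out}}})/(m\,[\Lambda_\delta]_{kk})$. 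Both $[\Lambda_\delta]_{kk}$ and $\sigma_k$ are sorted in decreasing order by the same index, so $\rho_k^{(\delta)}$ increases as $\sigma_k$ decreases. Regressing $\log\rho_k^{(\delta)}$ on $\log\sigma_k$ over the window therefore returns the slope $-\,d\log[\Lambda_\delta]_{kk}/d\log\sigma_k$, not zero; for example, the slope is $-a/\gamma$ when $[\Lambda_\delta]_{kk}\sim k^{-a}$ and $\sigma_k\sim k^{-\gamma}$. Saying the denominator is ``indexed by $k$ and not by $\sigma_k$'' does not help, because the regression pairs $\rho_k^{(\delta)}$ with $\sigma_k$ through $k$. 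What decorrelation actually gives is $d\log(u_k^\top C_\delta u_k)/d\log\sigma_k\approx 0$, i.e.\ a flat $\delta$-factor of $h_k$, so that in Eq.~\eqref{eq:alpha-decomp} the $\delta$-parts of $\Lambda_k$ and $\Phi_k$ cancel. To reach the proposition as stated you need an extra hypothesis, for instance an approximately flat spectrum of $C_{\delta_{\mathrm{out}}}$ over the top-$k$ window. Without it, your own model contradicts the conclusion.

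Your suggested route to rigor for the decorrelation step would also fail, in the opposite direction. Write $B^{-1/2}\delta^\top = C_\delta^{1/2}R^\top$ with $R^\top R = I$. Then $G = C_\delta^{1/2}M$ with $M = B^{-1/2}R^\top A$, so $GG^\top = C_\delta^{1/2}MM^\top C_\delta^{1/2}$. Suppose, as you propose, that $A$ is independent of $\delta$ and rotationally generic, say with zero-mean Gaussian rows of covariance $\Sigma_A$. Then $MM^\top$ is orthogonally invariant in law with mean $B^{-1}\mathrm{tr}(\Sigma_A)\,I$. So $GG^\top$ is a sample-covariance-type matrix with population $C_\delta$; indeed $\mathbb{E}[GG^\top] = B^{-1}\mathrm{tr}(\Sigma_A)\,C_\delta$. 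Its top $u_k$ therefore estimate the top eigenvectors of $C_\delta$. That is the alignment regime of Theorem~\ref{thm:curvature-baseline}, not Haar-distributed directions. Decorrelation needs the opposite: $MM^\top$ strongly anisotropic, with eigenvectors unrelated to $Q_\delta$ and anisotropy dominating that of $C_\delta$. Even then, $G = C_\delta^{1/2}M$ tilts $u_k$ toward large eigenvalues. For instance, $\|G^\top u_k\|\le B^{-1}\|A\|\,\|\delta u_k\|$ gives $u_k^\top C_\delta u_k\ge \sigma_k^2/\lambda_{\max}(C_A)$. This is incompatible with $u_1^\top C_\delta u_1\approx \mathrm{tr}(C_\delta)/m$ as soon as $\sigma_1^2\gg \lambda_{\max}(C_A)\,\mathrm{tr}(C_\delta)/m$.

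There are also three smaller problems:
\begin{itemize}
\item Your perturbation step is only additive, while a log-slope needs relative control. Your bound only yields relative errors of order $\eta\,\mathrm{tr}(C_{\delta_{\mathrm{out}}})/[\Lambda_\delta]_{kk}$ in the denominator and, under your own model, of order $\eta m$ in the numerator, not $O(\eta)$.
\item Davis--Kahan is unnecessary, since $\rho_k^{(\delta)}$ involves only eigenvalues of $C_\delta$ and Weyl suffices.
\item $(I+J_f)^\top\delta_{\mathrm{out}}$ is the error at the block input $x$, i.e.\ that of a layer writing into $x$. Its $A$ is therefore not ``the input to $f$'s internal layer'', and for a layer inside $f$ the skip hypothesis says nothing directly about $C_\delta$.
\end{itemize}
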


Combined with LayerNorm, $\alpha \approx 2 - d\log\cos^2\theta_k/d\log\sigma_k$, explaining MLP up-projections with $\alpha \approx 0.8$.

\subsection{Softmax concentration}

\begin{proposition}[Softmax creates $\alpha > 2$]\label{prop:softmax}
For output layer with $c$ classes, $C_\delta$ has rank $\leq c$. Top singular vectors of $G$ align with top eigenvectors of $C_\delta$, giving $\rho_k^{(\delta)} \approx 1$ for $k \leq c$ and $\rho_k^{(\delta)} \ll 1$ beyond---a sharp drop creating $d\log\Phi_k/d\log\sigma_k > 0$ and $\alpha > 2$.
\end{proposition}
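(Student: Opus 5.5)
The argument splits into the rank claim, the alignment claim, and the slope claim; only the last needs a quantitative argument.

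\textbf{Rank.} For an output layer trained with softmax cross-entropy, the per-sample error is $\delta_i = p_i - e_{y_i} \in \mathbb{R}^c$, so $m = c$. Hence $C_\delta = \frac{1}{B}\delta^\top\delta$ is a $c\times c$ matrix and has rank at most $c$. In fact the rank is at most $c-1$, because every $\delta_i$ is orthogonal to $\mathbf{1}$. If the output head sits behind a projection to a wider space, the same bound holds because $\delta$ is confined to the $c$-dimensional image of the softmax Jacobian.

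\textbf{Alignment for $k\le c$.} The gradient $G = \frac{1}{B}\delta^\top A$ has column space inside $\operatorname{range}(\delta^\top) = \operatorname{range}(C_\delta)$, so every left singular vector $u_k$ with $\sigma_k>0$ lies in the span of the nonzero eigenvectors of $C_\delta$. To show that the ordering matches, write $GG^\top = \frac{1}{B^2}\delta^\top A A^\top \delta$. When the penultimate activations are approximately whitened on the directions the errors see, i.e.\ $\frac{1}{B}AA^\top$ acts near-isotropically on $\operatorname{range}(\delta)$, we get $GG^\top \approx \frac{\kappa}{B}\,C_\delta$ for some scalar $\kappa$. Then $U \approx Q_\delta$ and $\rho_k^{(\delta)} \approx 1$. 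I would state this as a perturbation bound: by Davis--Kahan, $|\rho_k^{(\delta)} - 1|$ is controlled by the anisotropy of $AA^\top$ restricted to $\operatorname{range}(\delta)$, divided by the eigengap of $C_\delta$.

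\textbf{Drop beyond $c$.} The formula for $\rho_k^{(\delta)}$ needs care here, because for $k > c$ we have $[\Lambda_\delta]_{kk} = 0$. Two options:
\begin{enumerate}[leftmargin=*,nosep]
\item Interpret $\rho_k^{(\delta)}$ through the GN-exact gap: the exact curvature $h_k$ is positive while $u_k^\top C_\delta u_k \approx 0$.
\item Regularize, using $C_\delta + \epsilon I$ or a noisy finite-batch $C_\delta$ whose trailing eigenvalues are small but nonzero, and let $u_k$ for $k > c$ be directions that are essentially orthogonal to $\operatorname{range}(C_\delta)$.
\end{enumerate}
I would take the second option. It gives $u_k^\top C_\delta u_k = O(\epsilon)$. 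If the trailing $\Lambda$ eigenvalues are not matched one-to-one with these $u_k$, then $\rho_k^{(\delta)} \ll 1$.

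\textbf{Slope.} I would substitute into Theorem~\ref{thm:spectral-alignment}, Part~2. Along the sequence $k$, $\log\Phi_k$ falls sharply across the index $k=c$ while $\log\sigma_k$ also decreases, so the fitted slope of $\log\Phi_k$ against $\log\sigma_k$ is positive. For $\rho_k^{(A)}$, I would take the penultimate features to be roughly aligned, or at least to have no opposing trend, so that $d\log\rho_k^{(A)}/d\log\sigma_k \ge 0$. Under the baseline of Theorem~\ref{thm:curvature-baseline}, $d\log\Lambda_k/d\log\sigma_k \approx 2$. Adding the two terms gives $\alpha > 2$.

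The main obstacle is making "slope" meaningful for a step-like profile. I would interpret $\alpha$ as the least-squares slope over the fitting window and show the following: if $\log\Phi_k$ is a nonincreasing function of $k$, hence a nondecreasing function of $\log\sigma_k$, with a strictly positive jump at $k=c$, then its regression coefficient on $\log\sigma_k$ is strictly positive. This follows because the covariance of two comonotone sequences is nonnegative, and it is strictly positive when both are nonconstant.
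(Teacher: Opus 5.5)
Your skeleton matches the paper's proof, which is a one-line sketch. It runs: rank $\le c$; the top-$c$ left singular vectors of $G$ align with eigenvectors of $C_\delta$ ``when predictions are concentrated''; so $\rho_k^{(\delta)}\approx 1$ for $k\le c$ and $\ll 1$ beyond; hence a positive slope. Your rank bound and whitening/Davis--Kahan step are a more careful version of its first two clauses, with whitened penultimate features replacing concentrated predictions as the alignment condition.

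The gap is in the step that actually produces $\alpha>2$. You observed that every $u_k$ with $\sigma_k>0$ lies in $\operatorname{range}(C_\delta)$. So the directions ``essentially orthogonal to $\operatorname{range}(C_\delta)$'', where you want $\rho_k^{(\delta)}\ll 1$, all have $\sigma_k=0$ and cannot enter a log--log fit of $h_k$ on $\sigma_k$. For the output layer itself there is no index $k>c$ at all. Widening through a linear map to the logits only adds more zero-$\sigma$ directions, and with a nonlinearity in between, $C_\delta$ no longer has rank $\le c$.

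Your regularization does not create a drop either. The trailing eigenvalues of $C_\delta+\epsilon I$ are exactly $\epsilon$, with eigenspace $\operatorname{null}(C_\delta)$, which contains those $u_k$, so $\rho_k^{(\delta)}=\epsilon/\epsilon=1$. The clause ``if the trailing eigenvalues are not matched one-to-one'' just restates the conclusion. Finally, you take $d\log\Lambda_k/d\log\sigma_k\approx 2$ from Theorem~\ref{thm:curvature-baseline} on exactly the indices where its shared-eigenbasis hypothesis is supposed to fail. Nothing then prevents $\Lambda_k$ from offsetting a drop in $\Phi_k$. Your comonotone-covariance lemma is correct, but the jump it needs never lies inside the fitting window.

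What is missing is control of the quantity that actually governs $\alpha-2$. Since $\sigma_k=u_k^\top G v_k=\frac{1}{B}(\delta u_k)^\top(Av_k)$, Cauchy--Schwarz gives $h_k=(u_k^\top C_\delta u_k)(v_k^\top C_A v_k)=\sigma_k^2/\cos^2\psi_k$. Here $\psi_k$ is the angle in $\mathbb{R}^B$ between $\delta u_k$ and $Av_k$. Under the Kronecker model, $\alpha-2$ is therefore minus the log--log slope of $\cos^2\psi_k$ against $\sigma_k$ on the nonzero spectrum.

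Your whitening hypothesis fixes this slope at zero. In its exact form $\frac{1}{B}AA^\top\delta=\kappa\delta$, it gives $Av_k=(\kappa/\sigma_k)\,\delta u_k$. Then $\cos^2\psi_k=1$ and $h_k=\sigma_k^2$, i.e.\ $\alpha=2$ exactly. Applied to a noisy full-rank $\delta$, the same hypothesis gives $GG^\top=\kappa C_\delta$, hence $\rho_k^{(\delta)}=1$ for every $k$, so again no drop. The mechanism you use to get alignment is thus one that forces $\alpha=2$. Proving $\alpha>2$ requires showing that the per-sample coherence $\cos^2\psi_k$ is lower along the top singular directions than along the lower ones. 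The paper's sketch does not supply such an argument either; it simply asserts the drop.
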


\subsection{The conv gap: GN vs.\ Kronecker}

For conv layers, $\cos^2\theta_k \sim 10^3$--$10^7$ yet $\alpha \approx 2$. The \emph{concentration ratio} $\bar{h}^{\mathrm{SoP}}/\bar{h}^{\mathrm{PoS}} \approx 1$ (Kronecker factorization is accurate), while $\cos^2\theta_k \gg 1$ (GN-to-exact gap from weight-sharing). What matters for $\alpha$ is the \emph{slope} $d\log\cos^2\theta_k/d\log\sigma_k \approx 0$, not the magnitude (Appendix~\ref{app:conv-gap}).

\subsection{Refined prediction: LayerNorm is necessary but not sufficient}

ViT-Tiny on CIFAR-10 has LayerNorm yet $\alpha \approx 2$. Controlled ablations (Appendix~\ref{app:ablations}) show that large output dimension creates sparse, anisotropic $C_\delta$, pulling $\alpha$ below $2$ on CIFAR-100; switching GPT-2 from LM to binary classification raises $\alpha$ toward $2$. The corrected condition: $\alpha < 2$ requires \textbf{both} LayerNorm \textbf{and} sufficiently large output dimension.

\section{Spectral Transfer and Intrinsic Dimension}\label{sec:spectral}

\subsection{The Alpha Triangle}

Three exponents summarize the spectral structure: curvature $\alpha$ ($h_k \propto \sigma_k^\alpha$), effective gradient rank decay $\gamma$ (slope of $\log \sigma_k$ vs.\ $\log k$ on top-$k$ directions), and Hessian decay $s$ ($\lambda_k \sim k^{-s}$ along comparable directions)~\cite{tang2025hessian}.

\begin{theorem}[Spectral transfer identity]\label{thm:triangle}
Suppose $\sigma_k \sim k^{-\gamma}$ and $h_k = c\sigma_k^\alpha$ on a rank interval. Then $\lambda_k \sim k^{-s}$ with $s = \alpha\gamma$.
\end{theorem}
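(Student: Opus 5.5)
The identity is essentially a composition of two power laws, so I will prove it by substitution and then make the ``$\sim$'' bookkeeping precise. The object $\lambda_k$ is the Hessian curvature indexed by rank along the comparable directions, which in our setting are the gradient singular directions $u_kv_k^\top$. So I first identify $\lambda_k$ with $h_k$ on the given rank interval. This is the step that needs care: $h_k$ is indexed by the rank of $\sigma_k$, while $\lambda_k$ is rank-ordered by its own magnitude. For $\alpha>0$, the map $\sigma\mapsto c\sigma^\alpha$ is strictly increasing, so the ordering of $h_k$ coincides with the ordering of $\sigma_k$. Hence the $k$-th largest curvature along these directions is exactly $h_k$, and no reindexing is required. (If $\alpha<0$ the order reverses; that case is outside the regime of Table~\ref{tab:arch-sweep}, and I would state $\alpha>0$ explicitly.)

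Next I substitute. Interpreting $\sigma_k\sim k^{-\gamma}$ as $\sigma_k = a\,k^{-\gamma}\,(1+\epsilon_k)$ with $\epsilon_k\to 0$ on the interval, or more weakly $\log\sigma_k = \log a - \gamma\log k + o(\log k)$, I get
\[
\log\lambda_k = \log h_k = \log c + \alpha\log\sigma_k = (\log c + \alpha\log a) - \alpha\gamma\log k + \alpha\, o(\log k).
\]
Thus $\lambda_k \sim c\,a^\alpha k^{-\alpha\gamma}$, which gives $s=\alpha\gamma$. The $o(\cdot)$ term is only multiplied by the constant $\alpha$, so it remains lower order.

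Since the paper's $\gamma$ and $\alpha$ are \emph{effective} log-log slopes over a finite window rather than true asymptotic exponents, I would also give the finite-window version via the chain rule in the style of Theorem~\ref{thm:spectral-alignment}, Part~2:
\[
\frac{d\log\lambda_k}{d\log k} = \frac{d\log h_k}{d\log\sigma_k}\cdot\frac{d\log\sigma_k}{d\log k} = \alpha\cdot(-\gamma).
\]
For least-squares slopes, exact linearity of $\log h$ in $\log\sigma$ implies that the fitted slope of $\log h$ on $\log k$ equals $\alpha$ times the fitted slope of $\log\sigma$ on $\log k$, by linearity of OLS in the response. This makes $s=\alpha\gamma$ hold exactly for the regression estimators on the same window.

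The main obstacle is the identification $\lambda_k = h_k$. Tang et al.'s $s$ refers to true Hessian eigenvalues, while $h_k$ are Rayleigh quotients along the gradient directions. I would not try to prove that these coincide. Instead I would state the theorem for $\lambda_k$ defined along the comparable directions, as the statement says, and remark that under the alignment hypotheses of Theorem~\ref{thm:curvature-baseline} the $u_kv_k^\top$ are eigenvectors of $C_\delta\otimes C_A$, so the two notions agree there.
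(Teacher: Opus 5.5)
Your proposal is correct and follows the paper's proof, which is exactly the substitution $\lambda_k = h_k = c(c'k^{-\gamma})^\alpha = c''k^{-\alpha\gamma}$; your monotonicity check for $\alpha>0$ and your OLS-linearity version for finite windows are sound refinements that the paper leaves implicit. One caution on your closing remark: $u_kv_k^\top$ being eigenvectors of $C_\delta\otimes C_A$ does not make the $h_k$ the rank-ordered eigenvalues of that matrix, because the cross products $[\Lambda_\delta]_{ii}[\Lambda_A]_{jj}$ with $i\neq j$ (eigenvectors $u_iv_j^\top$) are also in its spectrum, so keep the statement restricted to the comparable directions as you propose.
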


\begin{proof}
$\lambda_k = h_k = c(c' k^{-\gamma})^\alpha = c'' k^{-\alpha\gamma}$.
\end{proof}

\begin{remark}[Why this is not a tautology]\label{rem:not-tautology}
The identity $s = \alpha\gamma$ is algebraic under the local power-law ansatz---the one-line proof is intentional. The empirical content is threefold: \textbf{(i)}~$\alpha$ and $\gamma$ are fit on \emph{different} data ($(\sigma_k, h_k)$ from HVPs vs.\ $(\log k, \log\sigma_k)$ from SVD), yet their product recovers $s$ from rank-ordered curvatures to median error \textbf{1.9\%} on CIFAR, \textbf{1.0\%} on Tiny-ImageNet, \textbf{1.6\%} on ImageNet-1K (Figure~\ref{fig:triangle}); \textbf{(ii)}~$\sigma_k$ is \emph{not} globally $k^{-\gamma}$---BIC selects log-normal over power-law on most layers (Figure~\ref{fig:rank-profile}; Appendix~\ref{app:gamma-profile})---yet the effective exponent $\gamma$ on the top-$k$ window still produces a tight identity; and \textbf{(iii)}~the identity holds across 93 layers spanning five architectures and three datasets with no free parameters. A tautology produces no testable predictions; this identity is violated whenever the local power-law ansatz breaks down, and it \emph{does} break at boundary layers (Appendix~\ref{app:triangle}).
\end{remark}

\begin{figure}[t]
\centering
\includegraphics[width=0.58\linewidth]{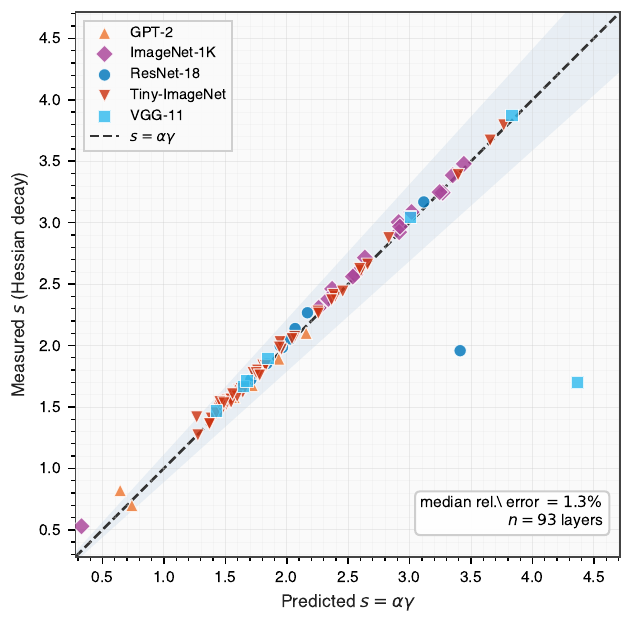}
\caption{Measured Hessian decay $s$ vs.\ predicted $s=\alpha\gamma$ across 93 layers and five architectures/datasets. Points near the diagonal validate the spectral transfer identity; shaded band is $\pm10\%$. Color indicates data source.}
\label{fig:triangle}
\end{figure}

\begin{figure}[t]
\centering
\includegraphics[width=\linewidth]{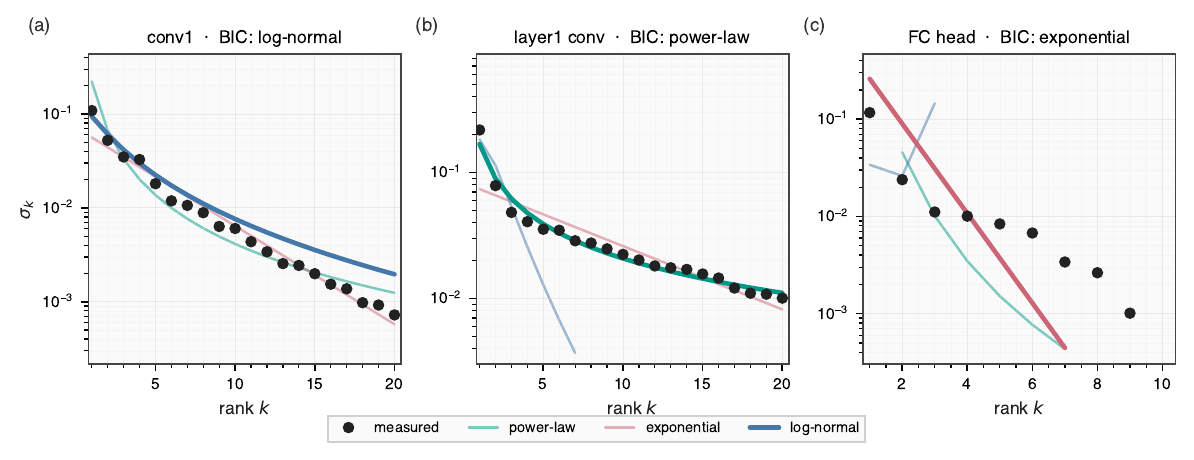}
\caption{Gradient singular value rank profiles on ResNet-18 with BIC-selected models. \textbf{(a--c)} Measured $\sigma_k$ (points) vs.\ power-law, exponential, and log-normal fits (lines); bold curve is the BIC winner. Log-normal fits are clipped to the data range on the FC head panel.}
\label{fig:rank-profile}
\end{figure}

\begin{corollary}[Connection to HT-SR]\label{cor:htsr}
Martin--Mahoney's weight tail exponent $\alpha_{\mathrm{weight}}$~\cite{martin2021implicit} is constrained by $s = \alpha\gamma$ governing the Hessian spectrum. We find $\alpha_{\mathrm{weight}} < s$ consistently, consistent with heavier weight tails correlating with better generalization.
\end{corollary}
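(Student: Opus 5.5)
The corollary makes two claims. The first is structural: $\alpha_{\mathrm{weight}}$ is \emph{constrained by} the triad $(\alpha,\gamma,s)$. The second, $\alpha_{\mathrm{weight}}<s$, is an empirical finding. I would prove only the first, as a deduction from Theorem~\ref{thm:triangle}, and treat the inequality as a measured consequence to be checked layer by layer.

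\textbf{Step 1: express the weight tail exponent as a rank-decay exponent.} Martin--Mahoney fit the empirical spectral density of $W_\ell^\top W_\ell$ as $\rho(\lambda)\sim\lambda^{-\alpha_{\mathrm{weight}}}$ on its tail. A density power law converts to a rank-ordering power law through the counting function $N(\lambda)=\#\{i:\lambda_i\ge\lambda\}\sim\lambda^{1-\alpha_{\mathrm{weight}}}$. Inverting it gives $\lambda_k^{W}\sim k^{-1/(\alpha_{\mathrm{weight}}-1)}$. So $\alpha_{\mathrm{weight}}$ fixes a rank-decay exponent $\beta_W=1/(\alpha_{\mathrm{weight}}-1)$ for the weight singular values.

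\textbf{Step 2: link the weight spectrum to the gradient spectrum.} Near convergence, $W_\ell$ is the initialization plus an accumulated sum of preconditioned gradients. The tail (top) of its spectrum is dominated by this accumulated low-rank update. The top singular directions of that update inherit the gradient's effective rank decay, with exponent comparable to $\gamma$, on the same top-$k$ window used in Section~\ref{sec:setup}.

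\textbf{Step 3: substitute into Theorem~\ref{thm:triangle}.} Because $s=\alpha\gamma$, any relation between $\beta_W$ and $\gamma$ becomes a relation between $\alpha_{\mathrm{weight}}$ and $s/\alpha$. This is exactly the sense in which $\alpha_{\mathrm{weight}}$ is ``constrained by'' $s$. One would then check the inequality $\alpha_{\mathrm{weight}}<s$ layer by layer. For conv layers, where $\alpha\approx2$, it reduces to a comparison of $\alpha_{\mathrm{weight}}$ with $2\gamma$.

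\textbf{Main obstacle: Step 2.} Accumulated gradients need not share a basis over training, and the random bulk coming from initialization can contaminate the tail. I would first make an explicit modeling assumption: the top gradient singular subspaces are stable across late training, so their sum preserves the local power-law ansatz. That assumption makes the corollary conditional.

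The strict inequality itself cannot be derived without quantitative control of the constant relating $\beta_W$ to $\gamma$. So I would present it as an empirical statement supported by the same 93-layer measurement set, and not as a theorem.
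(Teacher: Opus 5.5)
The paper gives no proof of this corollary. It is stated without argument, and no $\alpha_{\mathrm{weight}}$ values appear anywhere in the paper: the 93-layer triangle data contain $\alpha$, $\gamma$, $s$, but no fits to the weight ESDs. You are right that $\alpha_{\mathrm{weight}}<s$ can only be an empirical claim. However, that measurement set does not support it; checking it would require fitting the ESDs of the trained weights.

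\textbf{Step~3.} The deductive part of your plan has a gap beyond the one you flag. Your chain ties $\alpha_{\mathrm{weight}}$ only to the gradient exponent $\gamma$. Writing $\gamma=s/\alpha$ is then a relabeling: the curvature exponent, and hence the Hessian, never enters. At most you obtain a weight--gradient link, not a weight--Hessian one. A genuine link needs a mechanism by which the spectrum of $W_\ell$ feeds into the Kronecker factors that set $h_k$. For instance, $W_\ell$ shapes $C_A$ of layer $\ell+1$ and $C_\delta$ of layer $\ell-1$; no such mechanism is given.

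\textbf{Step~2.} This step fails for reasons a stable-basis assumption does not cure:
\begin{itemize}
\item The optimizer reshapes update spectra. Muon approximately sets all singular values to $1$, and AdamW's elementwise normalization does not act on singular values alone.
\item Weight decay and the initialization bulk also move the tail.
\item $\gamma$ itself drifts during training (Figure~\ref{fig:phase-transition}).
\item The top-$20$ window defining $\gamma$ need not match the tail window selected in the Martin--Mahoney fit.
\end{itemize}

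\textbf{Step~1.} There is a factor-of-two slip. Eigenvalues of $W^\top W$ are squared singular values, so the singular-value rank exponent is $1/(2(\alpha_{\mathrm{weight}}-1))$, not $1/(\alpha_{\mathrm{weight}}-1)$. This matters once you compare with $\gamma$. Granting $\sigma_k(W)\sim k^{-\gamma}$ gives $\alpha_{\mathrm{weight}}\approx 1+1/(2\gamma)$. Then $\alpha_{\mathrm{weight}}<\alpha\gamma$ holds only for large enough $\gamma$; for $\alpha=2$ it requires $\gamma\gtrsim 0.81$. Your model would therefore predict a violation on block2.mlp.0 in Appendix~\ref{app:triangle}: there $\gamma=0.76$ and $s=1.58$, while the predicted $\alpha_{\mathrm{weight}}\approx 1.66$. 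So ``consistently'' cannot come out of your chain unless the unknown constant does all the work.

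Also, $\alpha_{\mathrm{weight}}<s$ compares a density exponent with a rank exponent. In your own Step~1 terms, the like-for-like comparison is $1/(\alpha_{\mathrm{weight}}-1)$ versus $s$.
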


\subsection{Near-rank-one curvature}

\begin{theorem}[Zeta function bound]\label{thm:rank-one}
With $h_k \sim k^{-\alpha\gamma}$, the participation ratio $\mathrm{PR}_h = (\sum h_k)^2 / \sum h_k^2 \leq \zeta(\alpha\gamma)^2 / \zeta(2\alpha\gamma)$, bounded for $\alpha\gamma > 1$. For $\alpha\gamma = 2$: $\mathrm{PR}_h \lesssim 2.5$.
\end{theorem}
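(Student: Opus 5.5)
The plan is to treat the ansatz $h_k \sim k^{-\alpha\gamma}$ from Theorem~\ref{thm:triangle} as an exact power law, $h_k = C k^{-p}$ with $p = \alpha\gamma$, on the finite window $k = 1,\dots,K$ where the directions are measured. The participation ratio is invariant under rescaling $h_k \mapsto C h_k$, so the constant $C$ drops out and we may take $C=1$. Write $S_1(K) = \sum_{k\le K} k^{-p}$, $S_2(K) = \sum_{k\le K} k^{-2p}$ and $\mathrm{PR}_K = S_1(K)^2/S_2(K)$.

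A naive bound fails: $S_1(K)^2 \le \zeta(p)^2$ holds, but $S_2(K) \le \zeta(2p)$ points the wrong way for the denominator. The key step is therefore to show that $\mathrm{PR}_K$ is \emph{nondecreasing in $K$} whenever the $h_k$ are nonincreasing. Adding a term $x = h_{K+1}$ gives
\[
\frac{(S_1+x)^2}{S_2+x^2} \ge \frac{S_1^2}{S_2}
\quad\Longleftrightarrow\quad
2S_1S_2 + xS_2 \ge x\,(S_1^2 - S_2) = x\sum_{i\ne j\le K} h_i h_j .
\]
Since $x \le h_j$ for every $j \le K$, each term satisfies $x\, h_i h_j \le h_i h_j^2$. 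Hence the right-hand side is at most $\sum_{i\ne j} h_i h_j^2 \le S_1 S_2$, and the inequality follows with room to spare. This monotonicity lemma is the only nontrivial step; everything else is bookkeeping.

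With monotonicity in hand, $\mathrm{PR}_K \le \lim_{K\to\infty}\mathrm{PR}_K$. For $p > 1$ both series converge, with $S_1 \to \zeta(p)$ and $S_2 \to \zeta(2p)$, so the limit equals $\zeta(p)^2/\zeta(2p)$. This gives the claimed bound uniformly in the window size $K$. For $p \le 1$, $S_1$ diverges while $S_2$ stays bounded or grows more slowly, so no $K$-uniform bound exists; this explains the restriction $\alpha\gamma > 1$.

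For the numerical instance $\alpha\gamma = 2$, we use $\zeta(2) = \pi^2/6$ and $\zeta(4) = \pi^4/90$. These give $\zeta(2)^2/\zeta(4) = (\pi^4/36)\cdot(90/\pi^4) = 5/2$, so $\mathrm{PR}_h \le 2.5$.

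Finally, we read "$\sim$" as a power law only up to bounded multiplicative fluctuations $h_k = C k^{-p}(1+\epsilon_k)$. The monotonicity argument still applies whenever the measured $h_k$ are rank-ordered, which they are by construction when computing $s$. The limiting value then changes only by a factor controlled by $\sup_k|\epsilon_k|$, which is why the statement uses $\lesssim$.
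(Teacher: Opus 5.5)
Your argument is correct, and it is more complete than what the paper offers. The paper gives no written proof of this bound. It implicitly just evaluates the infinite series $\sum_k k^{-p}=\zeta(p)$ and $\sum_k k^{-2p}=\zeta(2p)$ and computes $\zeta(2)^2/\zeta(4)=5/2$. It never addresses why a finite measurement window should sit below the infinite-series value. Your monotonicity lemma supplies exactly that missing step, since, as you observe, the termwise bound on the denominator points the wrong way.

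There is one algebraic slip in the lemma. Clearing denominators gives $2xS_1S_2+x^2S_2\ge x^2S_1^2$, i.e.\ $2S_1S_2\ge x(S_1^2-S_2)$. Your displayed condition has an extra $xS_2$ on the left, so it is implied by the true condition but not equivalent to it. For example, with $h=(1,1)$ and $x=5$, your condition holds but the participation ratio drops from $2$ to $49/27$. The slip is harmless: the estimate you then prove, $x(S_1^2-S_2)\le S_1S_2$, establishes the correct inequality directly.

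For the fluctuation remark, there is a cleaner route than invoking rank-ordering of the measured values. Apply the lemma only to the exact power law and compare termwise. With $\epsilon=\sup_k|\epsilon_k|<1$, you have $S_1\le(1+\epsilon)C\sum_{k\le K}k^{-p}$ and $S_2\ge(1-\epsilon)^2C^2\sum_{k\le K}k^{-2p}$. Hence $\mathrm{PR}_K\le\bigl(\frac{1+\epsilon}{1-\epsilon}\bigr)^2\zeta(p)^2/\zeta(2p)$. This makes the $\lesssim$ quantitative and does not require the measured $h_k$ to form a prefix of a sorted sequence.
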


\noindent Despite hundreds of singular directions, curvature concentrates onto $\mathrm{PR}_h \in [1.0, 1.8]$ while gradients span moderate-dimensional subspaces ($\mathrm{PR}_{\mathrm{grad}} \in [1.8, 58]$; Appendix~\ref{app:pr-table}). Width scaling from $n=64$ to $512$ changes PR by only 0.7 (Appendix~\ref{app:pr-scaling}).

\subsection{From decomposition to Spectral Newton}\label{sec:optimizer-validation}

The decomposition predicts \emph{which layers} admit Kronecker-style curvature approximations: when $|\alpha - 2|$ is small and the concentration ratio $\bar{h}^{\mathrm{SoP}}/\bar{h}^{\mathrm{PoS}} \approx 1$, the Kronecker product-of-means matches the exact mean-of-products Hessian along gradient directions (Figure~\ref{fig:kfac-pred}). Conversely, layers with $\alpha \ll 2$ (LayerNorm + large output dimension) or $\alpha \gg 2$ (softmax heads) break alignment; full Hessian or task-specific preconditioning is required, consistent with K-FAC/Shampoo using richer structure than diagonal Adam.

\paragraph{Prescriptive transfer function.}
When $h_k \propto \sigma_k^\alpha$, the optimal step along singular direction $k$ scales as $T(\sigma_k;\alpha) = \sigma_k/(\sigma_k^\alpha + d)$ rather than a uniform spectral flattening (Muon) or a single scalar per layer. For convolutional layers with $\alpha \approx 2$, this reduces to Spectral Newton (SN): $\Delta W = U\,\diag(\sigma_k/(\sigma_k^2+d))\,V^\top$ for $G = U\Sigma V^\top$. Per-layer $\alpha$ from Table~\ref{tab:arch-sweep} selects the family member---$\alpha \approx 1$ for transformers suggests $T(\sigma) = \sigma/(\sigma+d)$ instead.

\begin{figure}[t]
\centering
\includegraphics[width=\linewidth]{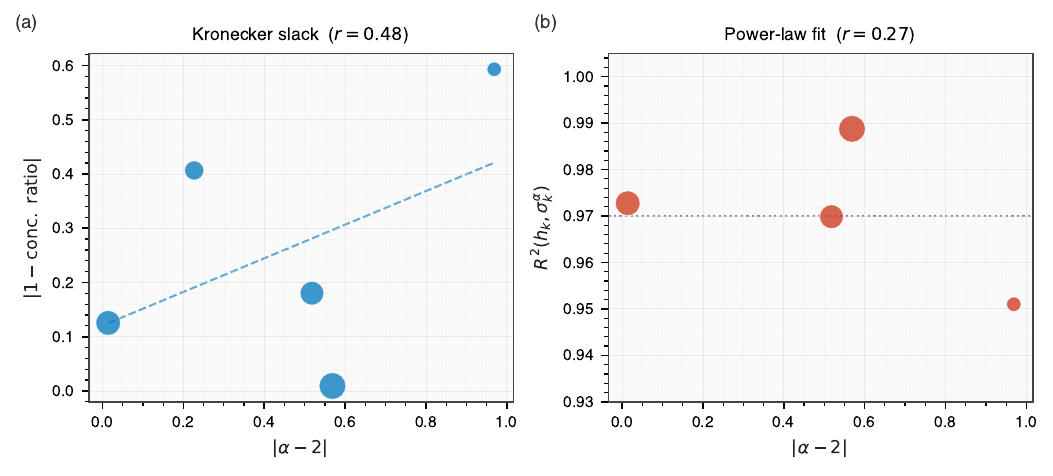}
\caption{Kronecker factorization quality vs.\ $|\alpha-2|$ on ResNet-18 conv layers (+ FC control). \textbf{(a)} Deviation of concentration ratio from 1 (marker size $\propto$ accuracy). \textbf{(b)} $R^2$ of $h_k \propto \sigma_k^\alpha$; dotted line at 0.97.}
\label{fig:kfac-pred}
\end{figure}

\paragraph{Directional evidence.}
As a proof of concept, we test SN on ResNet-18/50 vision tasks where median $\alpha \approx 2$ (Table~\ref{tab:arch-sweep}). Using matrix-function recovery after polar decomposition at $\sim$1.8$\times$ Muon cost, SN outperforms AdamW at matched epoch budgets on all three benchmarks (Table~\ref{tab:sn-vision}), with gains of $+1.1\%$ on CIFAR-10 and $+2.4$--$3.5\%$ on the harder CIFAR-100. These are single-seed runs reported as directional evidence that the decomposition yields a useful optimizer, not as a comprehensive benchmark; multi-seed evaluation and LLM-scale testing are deferred to future work. Crucially, \emph{scalar} per-layer learning-rate scaling from $s=\alpha\gamma$ does \emph{not} reproduce this improvement (Appendix~\ref{app:scalar-precond}): curvature correction is directional, not a single scale per layer.

\begin{table}[t]
\centering
\caption{Spectral Newton vs.\ baselines (test accuracy \%, matched epochs, seed 42). SN uses damping $d{=}0.1$.}
\label{tab:sn-vision}
\small
\begin{tabular}{llccc}
\toprule
\textbf{Model} & \textbf{Dataset} & Epochs & AdamW & SN \\
\midrule
ResNet-18 & CIFAR-10 & 30 & 92.7 & \textbf{93.8} \\
ResNet-18 & CIFAR-100 & 30 & 71.5 & \textbf{75.0} \\
ResNet-50 & CIFAR-100 & 30 & 72.6 & \textbf{75.0} \\
\bottomrule
\end{tabular}
\end{table}

\section{Discussion}\label{sec:discussion}

\paragraph{Spectral asymptotics interpretation.}
The identity $\alpha = 2 + d\log\Phi_k/d\log\sigma_k$ is a Weyl-law analog: the decay exponent of curvature along gradient directions encodes the \emph{alignment geometry} of the loss Hessian restricted to the gradient subspace. Just as $\lambda_k \sim k^{2/d}$ reveals manifold dimension $d$, $\alpha$ reveals how Kronecker factor eigenbases align with gradient singular directions.

\paragraph{Optimizer implications.}
The decomposition yields curvature-adaptive transfer functions $T(\sigma;\alpha) = \sigma/(\sigma^\alpha + d)$ with architecture-dependent $\alpha$. As directional evidence, Spectral Newton implements this prescription in the gradient singular basis and outperforms AdamW on vision benchmarks where $\alpha \approx 2$ (Table~\ref{tab:sn-vision}; single-seed, proof of concept). The deeper structural insight is that Kronecker-style methods are justified when $|\alpha-2|$ is small (Figure~\ref{fig:kfac-pred}), and that scalar per-layer scaling from $s=\alpha\gamma$ fails because eigenvector alignment (Theorem~\ref{thm:spectral-alignment}) is directional---the theory prescribes $T(\sigma_k;\alpha_\ell)$ per singular direction, not one learning rate per layer.

\paragraph{Gradient rank profile and phase transitions.}
BIC favors log-normal-in-rank over strict power-law on most CIFAR layers (Figure~\ref{fig:rank-profile}). During training, the winning rank model shifts: Figure~\ref{fig:phase-transition} shows VGG-16 on Tiny-ImageNet transitioning from log-normal dominance at initialization toward mixed power-law/exponential regimes by epoch 15--25 (slower than ResNet-50, where the transition peaks by epoch 5). Thus $\gamma$ is an effective exponent; Appendix~\ref{app:kronecker-identity} shows that the same Kronecker factors predict the rank-profile shape.

\begin{figure}[t]
\centering
\includegraphics[width=0.68\linewidth]{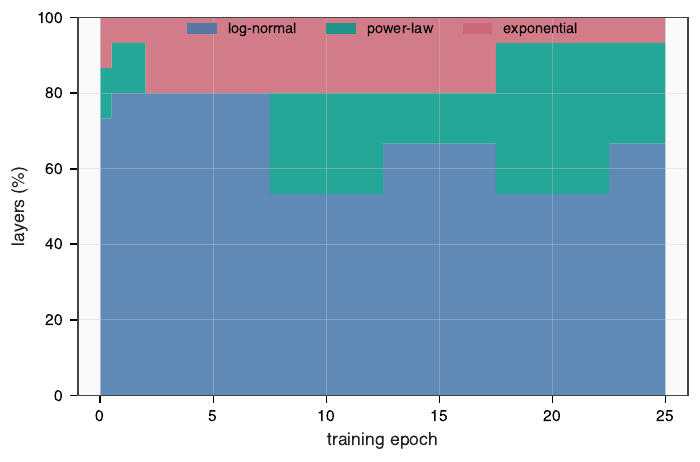}
\caption{Fraction of VGG-16 layers whose top-$k$ gradient spectrum is best fit by each BIC model vs.\ training epoch (Tiny-ImageNet). Log-normal dominance at init gives way to power-law and exponential wins---the \emph{spectral phase transition}.}
\label{fig:phase-transition}
\end{figure}

\paragraph{Limitations and open questions.}
\textbf{Scale.} Our largest models are GPT-2 (6-layer, 85M) and ResNet-50; the theory is architecture-generic and makes concrete, falsifiable predictions at larger scales---for instance, Theorem~\ref{thm:spectral-alignment} predicts that ViT-Large attention layers will exhibit $\alpha \approx 1$ (driven by LayerNorm-induced alignment collapse, Section~\ref{sec:mechanisms}) and that LLaMA-7B MLP layers will follow $\alpha < 1.2$ with the same Kronecker decomposition---but we have not yet verified these. Validation at the 1--10B parameter regime is the most important next step.
\textbf{Spectral Newton.} SN results (Table~\ref{tab:sn-vision}) are single-seed, single-architecture, and serve as directional evidence rather than a comprehensive optimizer comparison. Multi-seed evaluation with wall-clock accounting is needed.
\begin{enumerate}[leftmargin=*,nosep]
    \item Extend the zeta PR bound to log-normal or exponential rank spectra via Tauberian theorems.
    \item Validate the full Kronecker spectral theory at LLM scale (ViT-Large, LLaMA-7B, GPT-3 class).
    \item Connect spectral phase-transition timing to gradient flow on $(C_\delta(t), C_A(t))$.
\end{enumerate}

\section*{Reproducibility Statement}

All empirical claims are reproduced from scripts that measure exact Hessian-vector products per layer. The full reproducibility package is available at \url{https://github.com/9D-Labs/9d-spectral-alignment-decomposition}:
\begin{itemize}[leftmargin=*,nosep]
    \item \textbf{Frozen outputs:} 15 JSON files in \texttt{results/} covering all tables and figures.
    \item \textbf{Claim verification:} \texttt{verify\_claims.py} checks 27 quantitative claims against JSON; exits~1 on mismatch.
    \item \textbf{Figures:} \texttt{generate\_figures.py} and \texttt{analyze\_kfac\_prediction.py} build all 6 main-text plots from JSON (no GPU).
    \item \textbf{One command:} \texttt{make all} verifies claims, regenerates figures, and compiles the PDF.
    \item \textbf{Full reproduction:} \texttt{EXPERIMENTS.md} documents the measurement protocol for all 10 experiments (Modal A10G, $\sim$3--4\,h total).
\end{itemize}

\section*{Acknowledgments}
We thank the 9D Labs team for compute infrastructure and experimental support.

\appendix

\section{HVP Validation Table}\label{app:hvp-table}

\begin{center}
\small
\begin{tabular}{lccc}
\toprule
\textbf{Layer} & $R^2(h, \sigma^2)$ & $R^2(h_{\mathrm{GN}}, \sigma^2)$ & $R^2(h, h_{\mathrm{GN}})$ \\
\midrule
conv1 & 0.999 & 0.925 & 0.929 \\
layer2.0.conv1 & 0.998 & 0.998 & 0.997 \\
layer4.0.conv2 & 0.995 & 0.995 & 0.998 \\
fc (10 classes) & 0.694 & 0.869 & 0.786 \\
\midrule
\textbf{Median (21 layers)} & \textbf{0.998} & \textbf{0.992} & \textbf{0.996} \\
\bottomrule
\end{tabular}
\end{center}

\section{Participation Ratio Table}\label{app:pr-table}

\begin{center}
\small
\begin{tabular}{lrcc}
\toprule
\textbf{Layer} & $n$ & Curv.\ PR & Grad.\ PR \\
\midrule
ResNet conv1 & 27 & 1.7 & 4.6 \\
ResNet layer3.conv1 & 256 & 1.2 & 37.4 \\
GPT-2 block5.mlp.0 & 384 & 1.0 & 2.6 \\
\bottomrule
\end{tabular}
\end{center}

\section{Controlled Ablations}\label{app:ablations}

\begin{center}
\small
\begin{tabular}{lccc}
\toprule
\textbf{Layer (ViT)} & CIFAR-10 $\alpha$ & CIFAR-100 $\alpha$ & $\Delta\alpha$ \\
\midrule
Attention QKV & 2.25 & 1.74 & $-0.51$ \\
MLP up & 2.12 & 1.62 & $-0.50$ \\
\bottomrule
\end{tabular}
\end{center}

\begin{center}
\small
\begin{tabular}{lccc}
\toprule
\textbf{Layer (GPT-2)} & LM $\alpha$ & CLS $\alpha$ & $\Delta\alpha$ \\
\midrule
MLP up & 0.73 & 1.59 & $+0.86$ \\
MLP down & 1.15 & 1.85 & $+0.69$ \\
\bottomrule
\end{tabular}
\end{center}

\section{Conv Gap Table}\label{app:conv-gap}

\begin{center}
\small
\begin{tabular}{lccc}
\toprule
\textbf{Layer} & $\cos^2\theta_k$ & Conc.\ ratio & $\alpha$ \\
\midrule
conv1 & $10^7$ scale & 0.88 & 1.99 \\
layer2.0.conv1 & $10^5$ scale & 0.98 & 2.57 \\
layer4.0.conv2 & $10^3$ scale & 0.59 & 2.23 \\
fc & $10^4$ scale & 0.88 & 1.64 \\
\bottomrule
\end{tabular}
\end{center}

\section{Training Dynamics of $\alpha$}\label{app:training-dynamics}

ResNet-18 on CIFAR-10, $\alpha$ at epochs 0, 1, 3, 5, 10, 15, 20:

\begin{center}
\small
\begin{tabular}{lccccccc}
\toprule
Layer & E0 & E1 & E5 & E10 & E15 & E20 \\
\midrule
conv1 & 1.45 & 1.83 & 1.78 & 1.79 & 1.79 & 1.81 \\
layer2.conv2 & 1.85 & 1.89 & 1.87 & 1.99 & 1.94 & 2.12 \\
layer4.conv2 & 1.90 & 1.92 & 1.98 & 1.88 & 1.80 & 1.90 \\
fc & 1.99 & 1.05 & 1.33 & 1.16 & 1.31 & 1.57 \\
\bottomrule
\end{tabular}
\end{center}

Conv layers converge to $\alpha \approx 2$; FC layers decrease as softmax concentration develops.

\section{Full Spectrum Analysis}\label{app:full-spectrum}

$\alpha$ by quartile of singular index $k$ (ResNet-18, top-50 directions):

\begin{center}
\small
\begin{tabular}{lcccc}
\toprule
Layer & Top-12 & Mid-hi & Mid-lo & Bottom \\
\midrule
layer2.conv2 & 2.07 & 2.35 & 2.56 & 2.59 \\
layer3.conv2 & 2.07 & 2.07 & 2.79 & 1.93 \\
\bottomrule
\end{tabular}
\end{center}

\section{Gradient Rank Profile and $\gamma$}\label{app:gamma-profile}

We compare three rank-ordered models for $\sigma_k$: power law $\sigma_k \propto k^{-\gamma}$, exponential $\sigma_k \propto e^{-\beta k}$, and log-normal in rank ($\log\sigma_k$ quadratic in $\log k$), using BIC.

\begin{center}
\small
\begin{tabular}{lccc}
\toprule
\textbf{Dataset / $k$} & Power law & Exponential & Log-normal \\
\midrule
CIFAR triangle JSON ($k{=}20$, 24 layers) & 1 & 2 & \textbf{21} \\
Tiny-ImageNet init ($k{=}100$, 54 layers) & 12 & 3 & \textbf{39} \\
Tiny-ImageNet trained ($k{=}100$, 54 layers) & 0 & 15 & \textbf{39} \\
\bottomrule
\end{tabular}
\end{center}

\noindent Exponential BIC wins increase during training (6$\to$15 layers from epoch 0 to 25) but do not dominate at initialization. Fitted $\gamma$ is therefore an \textbf{effective} log-log slope over the measurement window; the Alpha Triangle uses this $\gamma$ consistently. See the reproducibility package for reproduction details.

\section{Full Alpha Triangle Table}\label{app:triangle}

CIFAR-10 experiments: 20 layers, median relative error 1.9\% ($n{=}19$ well-conditioned at $\leq$10\% rel.\ error). Tiny-ImageNet ResNet-50: 54 layers, median 1.0\%. CNN layers: $R^2_\gamma > 0.88$ on CIFAR; GPT-2: strong $\alpha$ fits, weaker $\gamma$ power-law $R^2$ but identity still holds.

\begin{center}
\scriptsize
\begin{tabular}{lcccccc}
\toprule
Layer & $\gamma$ & $\alpha$ & $s$ & $\alpha\gamma$ & Err & Rel.\ \\
\midrule
\multicolumn{7}{l}{\textit{ResNet-18}} \\
conv1 & 1.73 & 1.80 & 3.17 & 3.12 & 0.05 & 1.7\% \\
layer1.0.conv1 & 0.91 & 1.88 & 1.72 & 1.71 & 0.01 & 0.8\% \\
layer4.0.conv2 & 0.77 & 2.81 & 2.27 & 2.17 & 0.10 & 4.3\% \\
\multicolumn{7}{l}{\textit{VGG-11}} \\
features.0 & 1.78 & 2.16 & 3.87 & 3.84 & 0.04 & 0.9\% \\
classifier.0 & 1.64 & 1.83 & 3.04 & 3.01 & 0.04 & 1.3\% \\
\multicolumn{7}{l}{\textit{GPT-2}} \\
block2.mlp.0 & 0.76 & 2.08 & 1.58 & 1.57 & 0.01 & 0.4\% \\
block5.mlp.2 & 1.06 & 2.04 & 2.10 & 2.16 & 0.06 & 2.7\% \\
output\_head & 0.96 & 2.02 & 1.89 & 1.93 & 0.04 & 2.3\% \\
\bottomrule
\end{tabular}
\end{center}

\section{Width Scaling of Participation Ratio}\label{app:pr-scaling}

Single-hidden-layer MLPs on CIFAR-10: PR grows from 2.5 ($n=64$) to 3.3 ($n=512$) despite 8$\times$ width---bounded, not linear in $n$.

\section{Proof Details}\label{app:proofs}

\subsection{LayerNorm (Theorem~\ref{thm:layernorm})}
$C_A^{\mathrm{LN}} = \Gamma \hat{C} \Gamma$ with $\hat{x}_i^\top \mathbf{1} = 0$. By Bai--Yin~\cite{baiyin1993}, $\kappa(\hat{C}|_{\mathbf{1}^\perp}) = 1 + O(\sqrt{n/B})$. The Rayleigh quotient $\rho_k^{(A)}$ is bounded in a $k$-independent interval, so $d\log\rho_k^{(A)}/d\log\sigma_k \to 0$.

\subsection{Softmax (Proposition~\ref{prop:softmax})}
$C_\delta$ has rank $\leq c$. Top-$c$ left singular vectors of $G = \frac{1}{B}\delta^\top A$ align with eigenvectors of $C_\delta$ when predictions are concentrated, giving $\rho_k^{(\delta)} \approx 1$ for $k \leq c$ and $\ll 1$ beyond---positive $d\log\Phi_k/d\log\sigma_k$.

\subsection{Emergence during training}
$\alpha$ converges to layer-type attractors because alignment ratios evolve predictably: conv layers tighten Kronecker alignment ($\Phi_k \to 1$, $\alpha \to 2$); FC layers develop softmax anti-alignment ($\alpha \to 1$); transformer layers maintain the LayerNorm barrier ($\alpha \approx 1$ throughout).

\section{VGG-16 Spectral Trajectory}\label{app:vgg-trajectory}

We repeat the spectral trajectory analysis (Appendix~\ref{app:gamma-profile}) on VGG-16 / Tiny-ImageNet (15 conv layers, 25 epochs, top-$k{=}100$). BIC model counts by epoch:

\begin{center}
\small
\begin{tabular}{lccc}
\toprule
Epoch & Power law & Exponential & Log-normal \\
\midrule
0 & 2 & 2 & 11 \\
5 & 2 & 3 & 10 \\
10 & 3 & 3 & 9 \\
15 & 5 & 5 & 5 \\
20 & 6 & 4 & 5 \\
25 & 4 & 5 & 6 \\
\bottomrule
\end{tabular}
\end{center}

\noindent VGG-16 exhibits the same spectral phase transition as ResNet-50: log-normal dominance at initialization gives way to a mixed power-law/exponential regime after training. The transition is \emph{slower} (power-law peaks at epoch 15--20, vs.\ epoch 5 in ResNet-50), consistent with the absence of skip connections delaying spectral mixing through the backward pass. This confirms that the phase transition is architecture-general, not an artifact of residual structure.

\section{Kronecker Spectral Identity}\label{app:kronecker-identity}

The Spectral Alignment Decomposition (Theorem~\ref{thm:spectral-alignment}) determines $\alpha$; we show here that the same Kronecker factors also predict the \emph{shape} of the gradient rank profile.

\paragraph{Separable variance structure.}
In the Kronecker eigenbasis, the rotated gradient $\widetilde{G} = Q_\delta^\top G Q_A$ has entry variance
$\mathrm{Var}(\widetilde{G}_{ij}) \propto \lambda_i^{(\delta)} \lambda_j^{(A)},$
a rank-one separable profile. The singular value distribution of matrices with separable variance is characterized by the Couillet--Hachem Stieltjes transform~\cite{couillet2014separable}: the limiting spectral measure of $\widetilde{G}^\top \widetilde{G}$ depends only on the empirical distributions of $\{\lambda_i^{(\delta)}\}$ and $\{\lambda_j^{(A)}\}$.

\paragraph{Empirical validation.}
On 15 ResNet-50 layers (Tiny-ImageNet, 25 epochs), we compute Kronecker factors $C_\delta$, $C_A$ via backward hooks (im2col for conv layers) and compare their spectra to the gradient SVD:

\begin{center}
\small
\begin{tabular}{lc}
\toprule
\textbf{Metric} & \textbf{Value} \\
\midrule
$r(\log h_k^{\mathrm{Kron}},\; \log \sigma_k^2)$ & \textbf{0.974} (min 0.921, max 0.988) \\
Factor-to-gradient BIC match & \textbf{87\%} (13/15 layers) \\
$\min(r_{\mathrm{eff}})$: EXP gradient layers & 4.7 \\
$\min(r_{\mathrm{eff}})$: log-normal gradient layers & 19.4 \\
\bottomrule
\end{tabular}
\end{center}

\noindent When one Kronecker factor has low effective rank ($r_{\mathrm{eff}} < 10$), the gradient spectrum is exponential (hard spectral cutoff); when both have $r_{\mathrm{eff}} > 10$, the gradient spectrum is log-normal (CLT on the separable product). The factor BIC model matches the gradient's in 87\% of layers, with $C_A$ being the dominant driver (73\%).

\paragraph{Theoretical bridge.}
Three supporting results formalize the mechanism: (i)~a \emph{rank-truncation bound} (Weyl inequality) for exponential $\sigma_k$ when one factor has low $r_{\mathrm{eff}}$; (ii)~an \emph{effective rank bound} on $K_{\mathrm{eff}}(G)$; (iii)~a \emph{log-normal rank profile theorem}: nonzero Couillet--Hachem quantile curvature implies BIC selects log-normal over power-law when $\min(r_{\mathrm{eff}}) \gtrsim 10$ (50/50 synthetic verification). Full proofs are deferred to a forthcoming manuscript.

\section{Scalar spectral preconditioner (negative control)}\label{app:scalar-precond}

A cheaper alternative to SN is Adam with per-layer learning rate $\eta_\ell \propto (s_\ell / \bar{s})^\beta$ using measured $s_\ell = \alpha_\ell \gamma_\ell$. Across $\beta \in \{0.5, 1.0, 1.5\}$, three seeds, and controls (inverse scaling, random per-layer noise), best test accuracy on ResNet-18 / CIFAR-10 (100 epochs) differs from Adam by at most $\sim$0.2\%, within seed variance:

\begin{center}
\small
\begin{tabular}{lcc}
\toprule
\textbf{Optimizer} & Mean best (\%) & $\Delta$ vs.\ Adam \\
\midrule
Adam (baseline) & 93.85 & --- \\
Spectral scaling $\beta{=}1.0$ & 93.90 & $+0.05$ \\
Spectral scaling $\beta{=}1.5$ & 93.96 & $+0.11$ \\
\bottomrule
\end{tabular}
\end{center}

\noindent This confirms that knowing the eigenvalue profile via $s$ is insufficient without applying $T(\sigma_k;\alpha)$ along each gradient singular direction.

\end{document}